\newcommand{\yc}{\cellcolor{yellow}}
\newcommand{\opt}{\mathit{OPT}}
\newcommand{\graph}{\mathit{G}}
\newcommand{\graphset}{\mathcal{G}}
\newcommand{\vertex}[1]{V_{#1}}
\newcommand{\vertices}{\mathit{V}}
\newcommand{\edges}{\mathit{E}}
\newcommand{\parents}{\Pi}
\newcommand{\cost}{\mathit{score}}
\newcommand{\problem}{\mathit{\epsilon}\text{BNSL}}
\newcommand{\ceil}[1]{\lceil #1 \rceil}
\newcommand{\varI}{I}
\newcommand{\gobnilp}{GOBNILP}
\newcommand{\score}[2]{\sigma_{#1}({#2})}
\newcommand{\pen}[2]{t_{#1}({#2})}
\begin{document}

\title{Learning All Credible Bayesian Network Structures for Model Averaging}

\author{\name Zhenyu A. Liao \email z6liao@uwaterloo.ca\\ 
\name Charupriya Sharma \email c9sharma@uwaterloo.ca\\ 
\addr David R. Cheriton School of Computer Science\\ 
University of Waterloo\\ 
Waterloo, ON N2L 3G1, Canada
\AND
\name James Cussens \email james.cussens@bristol.ac.uk\\
\addr Department of Computer Science\\
University of Bristol\\
Bristol, BS8 1QU, United Kingdom
\AND
\name Peter van Beek \email vanbeek@uwaterloo.ca\\
\addr David R. Cheriton School of Computer Science\\ 
University of Waterloo\\ 
Waterloo, ON N2L 3G1, Canada}

\editor{Editor}

\maketitle

\begin{abstract}%
A Bayesian network is a widely used probabilistic graphical model with
applications in knowledge discovery and prediction.  Learning a
Bayesian network (BN) from data can be cast as an optimization
problem using the well-known score-and-search approach.
However, selecting a single model (i.e., the best scoring BN)
can be misleading or may not achieve the best possible accuracy.
An alternative to committing to a single model is to perform
some form of Bayesian or frequentist model averaging, where
the space of possible BNs is sampled or enumerated in some
fashion. Unfortunately, existing approaches for model averaging
either severely restrict the structure of the Bayesian network
or have only been shown to scale to networks with fewer than 30 random
variables. In this paper, we propose a novel approach to model
averaging inspired by performance guarantees in approximation
algorithms. Our approach has two primary advantages. First,
our approach only considers \emph{credible} models in that they
are optimal or near-optimal in score. Second, our approach
is more efficient and scales to significantly larger
Bayesian networks than existing approaches.

\end{abstract}

\begin{keywords}
  Bayesian Networks, Structure Learning, Bayes Factor, Unsupervised Learning
\end{keywords}

\section{Introduction}\label{sec:intro}

A Bayesian network is a widely used probabilistic graphical
model with applications in knowledge discovery, explanation,
and prediction~\citep{Darwiche09,KollerF09}.
A Bayesian network (BN) can be learned from data
using the well-known \emph{score-and-search} approach, where a
scoring function is used to evaluate the fit of a proposed BN
to the data, and the space of directed acyclic graphs (DAGs)
is searched for the best-scoring BN.
However, selecting a single model (i.e., the best-scoring BN)
may not always be the best choice.
When one is using BNs for knowledge discovery and explanation
with limited data, selecting a single model may be
misleading as there may be many other BNs that have scores that
are very close to optimal and the posterior probability of
even the best-scoring BN is often close to zero. As well, when one is using
BNs for prediction, selecting a single model may not achieve
the best possible accuracy. 

An alternative to committing to a single model is to
perform some form of Bayesian or frequentist model averaging ~\citep{Claeskens2008,HoetingMRV1999,KollerF09}.
In the context of knowledge discovery, Bayesian model averaging
allows one to estimate, for example, the posterior probability
that an edge is present, rather than just knowing whether the
edge is present in the best-scoring network.
Previous work has proposed
Bayesian and frequentist model averaging approaches to network structure
learning that enumerate the space of all possible DAGs~\citep{KoivistoS04}, sample from the space of all
possible DAGs~\citep{HeTW2016,MadiganR1994}, consider
the space of all DAGs consistent with a given ordering of
the random variables~\citep{Buntine91,DashC2004},
consider the space of tree-structured or other restricted
DAGs~\citep{MadiganR1994,Meila2000}, and consider
only the $k$-best scoring DAGs for some given value of $k$~\citep{ChenCD2015,ChenCD2016,chen2018pruning,ChenT2014,HeTW2016,TianHR10}.
Unfortunately, these existing approaches either severely
restrict the structure of the Bayesian network, such as
only allowing tree-structured networks or only considering a
single ordering, or have only been shown to scale to small
Bayesian networks with fewer than 30 random variables.

In this paper, we propose a novel approach to
model averaging for BN structure learning that is inspired
by performance guarantees in approximation algorithms. Let
$\opt$ be the score of the optimal BN and assume 
without loss of generality that the optimization problem
is to find the minimum-score BN.
Instead of
finding the $k$-best networks for some fixed value of $k$,
we propose to find all Bayesian networks $\mathcal{G}$ that
are within a factor $\rho$ of optimal; i.e.,
\begin{equation}
\label{EQUATION:factor}
\opt \le \cost(\mathcal{G}) \le \rho \cdot \opt,
\end{equation}
for some given value of $\rho \ge 1$, or equivalently,
\begin{equation}
\label{EQUATION:factor_add}
\opt \le \cost(\mathcal{G}) \le \opt+\epsilon,
\end{equation}
for $\epsilon = (\rho - 1) \cdot \opt$. Instead of choosing arbitrary 
values for $\epsilon$, $\epsilon \ge 0$, we show that for the 
two scoring functions BIC/MDL and BDeu, a good choice for the value of $\epsilon$ is 
closely related to the Bayes factor, a model selection criterion 
summarized in~\citep{kass1995bayes}.

Our approach has two primary advantages. First, our approach
only considers \emph{credible } models in that they are
optimal or near-optimal in score. Approaches that enumerate
or sample from the space of all possible models consider DAGs
with scores that can be far from optimal; for example,
for the BIC/MDL scoring function the ratio of worst-scoring
to best-scoring network can be four or five orders of
magnitude\footnote{\citet{MadiganR1994}
deem such models \emph{discredited} when they make a similar
argument for not considering models whose probability is
greater than a factor from the most probable.}. A
similar but more restricted case can be made against the
approach which finds the $k$-best networks since there is no 
\emph{a priori} way to know how to set the parameter $k$ such that only
credible  networks are considered. Second, and perhaps most
importantly, our approach is significantly more efficient and
scales to Bayesian networks with almost 60 random variables. Existing
methods for finding the optimal Bayesian network structure~\citep[see e.g.,][]{BartlettC13,vanBeek2015}
rely heavily for their success on a significant body of
pruning rules that remove from consideration many candidate
parent sets both before and during the search. We show that many 
of these pruning rules can be
naturally generalized to preserve the Bayesian networks that are 
within a factor of optimal. 
We modify GOBNILP~\citep{BartlettC13}, a state-of-the-art method for finding
an optimal Bayesian network, to implement our generalized pruning rules
and to find all \emph{near}-optimal networks. We show in an experimental
evaluation that the modified GOBNILP scales to significantly
larger networks without resorting to restricting the structure of the Bayesian networks that are learned.

\section{Background}\label{sec:background}

In this section, we briefly review the necessary background in
Bayesian networks and scoring functions, and define the Bayesian network structure
learning problem~\citep[for more background on these topics see][]{Darwiche09,KollerF09}.

\subsection{Bayesian Networks}

A Bayesian network (BN) is a probabilistic graphical model that
consists of a labeled directed acyclic graph (DAG), $\graph = (\vertices, \edges)$ in which the vertices $\vertices = \{V_{1}, \ldots, V_{n}\}$
correspond to $n$ random variables, the edges $E$ represent direct
influence of one random variable on another, and each vertex
$\vertex{i}$ is labeled with a conditional probability
distribution $P(V_{i} \mid \Pi_{i})$ that
specifies the dependence of the variable $V_{i}$ on its
set of parents $\Pi_i$ in the DAG. A BN can
alternatively be viewed as a factorized representation of the
joint probability distribution over the random variables and
as an encoding of the Markov condition on the nodes; i.e., given its parents, every variable is conditionally independent of its non-descendants.

Each random variable $V_i$ has state space $\Omega_i$ = $\{v_{i1}$, \ldots , $v_{i{r_i}}\}$\footnote{Our method works with continuous and mixed BNs, although the discussion focuses on the discrete case.}, where $r_i$ is the cardinality of $\Omega_i$ and typically $r_i\geq 2$. Each $\parents_{i}$ has state space $\Omega_{\parents_{i}} = \{\pi_{i1},\ldots,\pi_{i{r_{\parents_{i}}}}\}$.  We use $r_{\parents_{i}}$ to refer to the number of possible instantiations of the parent set $\parents_{i}$ of $V_i$ (see Figure \ref{fig:notation}). 
The set $\theta = \{ \theta_{ijk} \}$ for all $ i = \{1,\ldots, n \}, j = \{1,\ldots,r_{\parents_{i}}\}$ and  $k = \{1,\ldots,r_i\}$ represents parameter estimates in $G$ obtained either from expert knowledge or from a dataset, where each $\theta_{ijk}$ estimates the conditional probability $P(V_{i} = v_{ik} \mid \Pi_{i} = \pi_{ij})$.

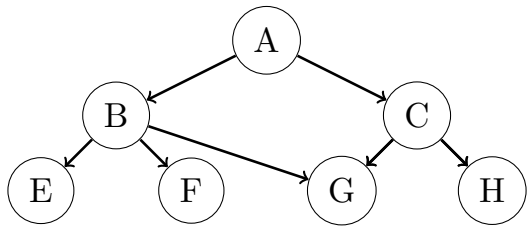
\begin{figure}[htbp]
\begin{center}
\begin{tikzpicture}[every node/.style={circle, draw, scale=1.2, fill=gray!50}, scale=1.0, rotate = 180, xscale = -1]

\node [fill=white](4) at ( 7,6) {E};
\node [fill=white](5) at ( 9,6) {F};
\node [fill=white](6) at ( 8,5) {B};
\node [fill=white](7) at ( 11,6) {G};
\node [fill=white](8) at ( 13,6) {H};
\node [fill=white](9) at ( 12,5) {C};
\node [fill=white](10) at ( 10,4) {A};

\draw [<-, line width=1pt] (4) -- (6) ;
\draw [<-, line width=1pt] (7) -- (6) ;
\draw [<-, line width=1pt] (5) -- (6) ;
\draw [<-, line width=1pt] (7) -- (9) ;
\draw [<-, line width=1pt] (8) -- (9);
\draw [<-, line width=1pt] (7) -- (9) ;
\draw [<-, line width=1pt] (8) -- (9);
\draw [<-, line width=1pt] (6) -- (10) ;
\draw [<-, line width=1pt] (9) -- (10) ;

\end{tikzpicture}
\caption{Example directed acyclic graph of a Bayesian network: Variables $A,B,F$ and $G$ have the state space $\{0,1\}$. The variables $C$ and $E$ have state space $\{0,1,3\} $ and $H$ has state space $\{2,4\}$ 
	Thus $r_A = r_B = r_F = r_G = 2$, $r_C= r_E = 3$ and $r_H = 2$.  Consider the parent set of $G$, $\parents_G = \{B,C\}$ The state space of $\parents_G$ is $\Omega_{\parents_G} = \{ \{0,0\}, \{0,1\}, \{0,3\}, \{1,0\}, \{1,1\}, \{1,3\} \}.$ and $r_{\parents_G} = 6$. }
\label{fig:notation}
\end{center}
\end{figure}

The predominant method for Bayesian network structure learning (BNSL) from data is
the \emph{score-and-search} method.
Let 
$I =
\{I_1, \ldots, I_N\}$ be a dataset where each instance $I_i$ is an $n$-tuple that is a
complete instantiation of the variables in $\vertices$. A \emph{scoring
function} $\sigma( \graph \mid I )$ assigns a real value measuring 
the quality of $\graph=(\vertex,\edges)$ given the data $I$.
Without loss of generality, we assume that a lower score
represents a better quality network structure and omit $I$ when the data is clear from context. 

\begin{definition}[credible network]
Given a non-negative constant $\epsilon$ and a dataset $I = \{I_1, \ldots, I_N\}$, a \textbf{credible  network} $G$ is a network that has a score $\score{}{\graph}$ such that $\opt \leq \score{}{\graph} \leq \opt + \epsilon$, where $\opt$ is the score of the optimal Bayesian network.  
\end{definition}

In this paper, we focus on solving a problem we call the 
$\epsilon$-Bayesian Network Structure Learning ($\problem$). Note 
that the BNSL for the optimal network(s) is a special case of 
$\problem$ where $\epsilon=0$.

\begin{definition}[$\problem$]
Given a non-negative constant $\epsilon$, a dataset $I = \{I_1, \ldots, I_N\}$ 
over random variables $\vertices$ = $\{V_{1}$, \ldots, $V_{n}\}$ and a scoring 
function $\sigma$, the $\epsilon$-Bayesian Network Structure Learning ($\problem$) 
problem is to find all credible networks.
\end{definition}

\subsection{Scoring Functions}

Scoring functions usually balance goodness of fit to
the data with a penalty term for model complexity
to avoid overfitting. Common scoring functions
include BIC/MDL~\citep{LamB94,Schwarz78} and BDeu~\citep{Buntine91,HeckermanGC95}. An important property of these
(and most) scoring functions is decomposability,
where the score of the entire network
$\sigma( \graph )$
can be rewritten as the sum of local scores associated to each vertex
$\sum_{i = 1}^{n} \sigma(\vertex{i},\parents_i )$
that only depends on $\vertex{i}$ and its parent set $\parents_{i}$ in
$\graph$. The local score is abbreviated below as $\score{}{\parents_i}$  when the local node $\vertex{i}$ is clear from context.
Pruning techniques can be used to reduce the
number of candidate parent sets that need to be considered,
but in the worst-case the number of candidate parent sets
for each variable $\vertex{i}$ is exponential in $n$, where $n$
is the number of vertices in the DAG.

In this work, we focus on the Bayesian Information Criterion (BIC) and the Bayesian Dirichlet, specifically BDeu, scoring functions. The BIC scoring function\footnote{We adopt the MDL notation that calculates a positive score.} in this paper is defined as,
\begin{equation*}
    BIC : \score{}{\graph} = -\max_{\theta} L_{G,I} (\theta) + t({G})\cdot w = - \sum_{i=1}^n \sum_{j=1}^{r_{\Pi_i}} \sum_{k=1}^{r_i} n_{ijk} \log \frac{n_{ijk}}{n_{ij}} +  \sum_{i=1}^n r_{\Pi_i}(r_i -1)\frac{\log N}{2}.
\end{equation*}
Here, $w = \frac{\log N}{2}$, $t(G)$ is a penalty term and $L_{G,I} (\theta)$ is the log likelihood, given by,
\begin{equation*}
    L_{G,I} (\theta) = \displaystyle  \sum_{i=1}^n \sum_{j=1}^{r_{\Pi_i}} \sum_{k=1}^{r_i} \log \theta_{ijk}^{n_{ijk}} ,
\end{equation*}
where $n_{ijk}$ is the number of instances in $I$ where $v_{ik}$ and $\pi_{ij}$ co-occur.
As the BIC function is decomposable, we can associate a score to $\Pi_i$, a candidate parent set of $\vertex{i}$ as follows,
\begin{equation*}
    BIC : \score{}{\Pi_i} = -\max_{\theta_i} L(\theta_i) + t({\Pi_i})\cdot w = - \sum_{j=1}^{r_{\Pi_i}} \sum_{k=1}^{r_i} n_{ijk} \log \frac{n_{ijk}}{n_{ij}} +  r_{\Pi_i}(r_i -1)\frac{\log N}{2}.
\end{equation*}
Here, $L (\theta_i) =\sum_{j=1}^{r_{\Pi_i}}\sum_{k=1}^{r_i}  \log \theta_{ijk}^{n_{ijk}}$ and $t(\Pi_i) = r_{\Pi_i}(r_i -1)$.
The BDeu scoring function\footnote{Our BDeu notation calculates a positive score that is consistent with the minimization setting.} in this paper is defined as,
\begin{align*}
    BDeu : \score{}{G} &= \displaystyle  - \sum_{i=1}^n \left( \sum_{j=1}^{r_{\Pi_i}} \left( \log \frac{\Gamma \left( \frac{\alpha}{r_{\parents_i}} \right)}{ \Gamma \left( \frac{\alpha}{r_{\parents_i}} + n_{ij} \right)} + \sum_{k=1}^{r_i} \log \frac{\Gamma \left(\frac{\alpha}{r_i r_{\parents_i}}+ n_{ijk}\right)}{\Gamma\left(\frac{\alpha}{r_i r_{\parents_i}} \right)} \right) \right),
\end{align*}
where $\alpha$ is the equivalent sample size and $n_{ij} = \sum_k n_{ijk}$.
As the BDeu function is decomposable, we can associate a score to $\Pi_i$, a candidate parent set of $\vertex{i}$ as follows,
\begin{align*}
    BDeu : \score{}{\Pi_i} &= \displaystyle  - \sum_{j=1}^{r_{\Pi_i}} \left( \log \frac{\Gamma \left( \frac{\alpha}{r_{\parents_i}} \right)}{ \Gamma \left( \frac{\alpha}{r_{\parents_i}} + n_{ij} \right)} + \sum_{k=1}^{r_i} \log \frac{\Gamma \left(\frac{\alpha}{r_i r_{\parents_i}}+ n_{ijk}\right)}{\Gamma\left(\frac{\alpha}{r_i r_{\parents_i}} \right)} \right).
\end{align*}

\section{The Bayes Factor}\label{sec:bf}

In this section, we show that a good choice for the value of $\epsilon$ for the
$\problem$ problem is closely related to the 
Bayes factor (BF), a model selection criterion summarized in~\citep{kass1995bayes}.

The BF was proposed by Jeffreys as an alternative to significance tests~\citep{jeffreys1967theory}. It was thoroughly examined as a practical model selection tool in~\citep{kass1995bayes}. Let $\graph_0$ and $\graph_1$ be two DAGs (BNs) in the set of all DAGs $\graphset$ defined over a set of random variables $\vertex{}$. The BF in the context of BNs is defined as,
$$
BF(\graph_0,\graph_1)=\frac{P(\varI \mid \graph_0)}{P(\varI \mid \graph_1)},
$$
namely the odds of the probability of the data predicted by network $\graph_0$ and $\graph_1$. The actual calculation of the BF often relies on  Bayes' Theorem as follows,
$$
\frac{P(\graph_0 \mid \varI)}{P(\graph_1 \mid \varI)}=\frac{P(\varI \mid \graph_0)}{P(\varI \mid \graph_1)}\cdot\frac{P(\graph_0)}{P(\graph_1)}=\frac{P(\varI,\graph_0)}{P(\varI,\graph_1)}.
$$
Since it is typical to assume the prior over models is uniform, the BF can then be obtained using either $P(\graph \mid \varI)$ or $P(\varI,\graph), \forall\graph\in\graphset$. We use those two representations to show how BIC and BDeu scores relate to the BF.

Using the Laplace approximation and other simplifications, \citet{ripley1996pattern} derived the following approximation to the logarithm of the marginal likelihood for network $\graph$~\citep[a similar derivation is given in][]{Claeskens2008},
\begin{align*}
\log{P(\varI \mid \graph)} =& \max_{\theta} L_{G,I} (\theta) - t({G})\cdot \frac{\log N}{2}+t({G}) \cdot \frac{\log{2\pi}}{2}\\
&-\frac{1}{2}\log{|J_{G,I} (\theta)|} + \log{P(\theta \mid \graph)},
\end{align*}
where $J_{G,I} (\theta)$ is the Hessian matrix evaluated at the maximum likelihood estimate. It follows that,
$$
\log{P(\varI \mid \graph)}=-BIC(\varI,\graph)+O(1).
$$
The above equation shows that the BIC score was designed to approximate the log marginal likelihood. 
If we drop the lower-order term, we can then obtain the following equation,
\begin{align*}
    BIC(\varI,\graph_1)-BIC(\varI,\graph_0)&=\log{\frac{P(\varI \mid \graph_0)}{P(\varI \mid \graph_1)}} = \log{BF(\graph_0,\graph_1)}.
\end{align*}

It has been indicated in~\citep{kass1995bayes} that as $N\rightarrow\infty$, the difference of the two BIC scores, 
dubbed the Schwarz criterion,  approaches the true value of $\log{BF}$ such that,
$$
\frac{BIC(\varI,\graph_1)-BIC(\varI,\graph_0)-\log{BF(\graph_0,\graph_1)}}{\log{BF(\graph_0,\graph_1)}}\rightarrow 0.
$$
Therefore, the difference of two BIC scores can be used as a rough approximation to $\log{BF}$. Note that some papers define BIC to be twice as large as the BIC defined in this paper, but the above relationship still holds albeit with twice the logarithm of the BF.

Similarly, the difference of the BDeu scores can be expressed in terms of the BF. In fact, the BDeu score is the log marginal likelihood where there are Dirichlet distributions over the parameters~\citep{Buntine91,HeckermanGC95}; i.e.,
$$
\log{P(\varI,\graph)}=-BDeu(\varI,\graph),
$$
and thus,
\begin{align*}
    BDeu(\varI,\graph_1)-BDeu(\varI,\graph_0)&=\log{\frac{P(\varI,\graph_0)}{P(\varI,\graph_1)}} = \log{BF(\graph_0,\graph_1)}.
\end{align*}

The above results are consistent with the observation in~\citep{kass1995bayes} that the $\log{BF}$ can be interpreted as a measure for the \emph{relative success} of two models at predicting data, sometimes referred to as the ``weight of evidence'', without assuming either model is true. The maximal acceptable distance from the optimal model, however, is often specific to a study and determined with domain knowledge; e.g., a BF of 1000 is more appropriate in forensic science. \citet{HeckermanGC95} proposed the following interpreting scale for the BF: a BF of 1 to 3 bears only anecdotal evidence, a BF of 3 to 20 suggests some positive evidence that $\graph_0$ is better, a BF of 20 to 150 suggests strong evidence in favor of $\graph_0$, and a BF greater than 150 indicates very strong evidence. If we deem 20 to be the desired BF in $\problem$, i.e., $\graph_0=\graph^*$ and $\epsilon=\log(20)$, then any network with a score less than $\log(20)$ away from the optimal score would be \emph{credible}, otherwise it would be \emph{discredited}. Note that the ratio of posterior probabilities was defined as $\lambda$ in~\citep{TianHR10,ChenT2014} and was used as a metric to assess arbitrary values of $k$ in finding the $k$-best networks.

Finally, the $\problem$ problem using the BIC or BDeu scoring function given a desired BF can be written as,
\begin{align}\label{EQUATION:bf}
    &\opt \le \cost(\graphset) \le \opt+\log{BF}.
\end{align}

\section{Pruning Rules for Candidate Parent Sets}\label{sec:pruning}

To find all near-optimal BNs given a BF, the local score $\sigma( \parents_{i} )$ for each candidate parent set $\parents_{i} \subseteq 2^{\vertices - \{\vertex{i}\}}$ and each random variable $\vertex{i}$ must be considered. As this is very cost prohibitive, it is important that the search space of candidate parent sets be pruned, provided that global optimality constraints are not violated. In this section, we generalize existing pruning rules such that the generalized rules hold when solving the $\problem$ problem.





A candidate parent set $\Pi_i$ can be \textit{safely pruned} given a non-negative constant $\epsilon \in \mathbb{R}^+$ if $\Pi_i$ cannot be the parent set of $V_i$ in any network in the set of credible networks. Note that for $\epsilon=0$, the set of credible  networks just contains the optimal network(s). We discuss the original rules and their generalization below and proofs for each can be found in the \emph{appendix}.


\citet{TeyssierK05} give a pruning rule for all decomposable scoring functions. This rule compares the score of a candidate parent set to those of its subsets. We give a relaxed version of the rule.
	

	\begin{lemma}
	Given a vertex variable $\vertex{j}$, candidate parent sets
	$\Pi_j$ and  $\Pi_j^{\prime}$, and some $\epsilon\in \mathbb{R}^+$,  if $\Pi_j \subset \Pi_j^{\prime}$ and $\score{}{\Pi_j} + \epsilon \geq \score{}{\Pi_j'}$,
	$\Pi_j^{\prime}$ can be safely pruned if $\sigma$ is a decomposable scoring function. \label{lem:scoreprune}
	\end{lemma}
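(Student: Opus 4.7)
The plan is to argue by contrapositive via a local exchange. I would assume that $\Pi_j'$ appears as the parent set of $V_j$ in some network $G$, and construct $G'$ from $G$ by replacing $\Pi_j'$ with $\Pi_j$ while keeping every other family unchanged. The goal is then to show that $G$ cannot satisfy $\sigma(G) \leq \opt + \epsilon$, so that $G$ is not credible and $\Pi_j'$ is indeed never used in a credible network.

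The first step checks that the swap produces a valid structure. Because $\Pi_j \subset \Pi_j'$, passing from $G$ to $G'$ only \emph{deletes} some arcs incoming at $V_j$, and deleting arcs from a DAG cannot create a directed cycle. Hence $G'$ is still a DAG, and therefore $\sigma(G') \geq \opt$ by the definition of $\opt$.

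The second step uses decomposability. Since $\sigma$ is a sum of local scores and $G$ and $G'$ differ only in the family at $V_j$, one immediately obtains the identity $\sigma(G) - \sigma(G') = \sigma(\Pi_j') - \sigma(\Pi_j)$. Combining this identity with the hypothesized score inequality (which, properly oriented, controls the gap $\sigma(\Pi_j') - \sigma(\Pi_j)$ relative to $\epsilon$) and with $\sigma(G') \geq \opt$, one shows that $\sigma(G) > \opt + \epsilon$, contradicting credibility. This forces no credible network to use $\Pi_j'$ at $V_j$, matching the definition of ``safely pruned.''

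I do not anticipate a real obstacle: this is a standard one-shot exchange argument, and the two ingredients that make it go through---acyclicity being preserved under edge deletion, and the score change being localized to a single family by decomposability---are both baked into the problem set-up. The only subtlety worth flagging is the boundary case created by credibility being defined with a non-strict inequality; that only determines whether the assumed score inequality must be strict, and does not affect the flavour of the argument. Because the proof never touches the specifics of BIC or BDeu, the same reasoning will also be the template for the other generalized pruning rules deferred to the appendix.
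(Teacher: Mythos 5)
Your proof is correct and follows essentially the same route as the paper's: a single local exchange at $\vertex{j}$, with decomposability localizing the score change to one family and the subset relation guaranteeing the exchanged graph is still a DAG. Your caveat that the hypothesis must be ``properly oriented'' is exactly right and worth making explicit: as printed, the condition $\score{}{\Pi_j} + \epsilon \geq \score{}{\Pi_j'}$ points the wrong way (it says $\Pi_j'$ is at most $\epsilon$ worse than $\Pi_j$, which is perfectly compatible with $\Pi_j'$ appearing in a credible network), whereas the condition your exchange argument needs, and the one under which the lemma is actually invoked in Theorems~\ref{thm:decampospar}, \ref{thm:entropyrelaxed} and \ref{thm:jamesrelaxed}, is $\score{}{\Pi_j} + \epsilon < \score{}{\Pi_j'}$. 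With that reading, your chain $\score{}{G} = \score{}{G'} + \score{}{\Pi_j'} - \score{}{\Pi_j} > \opt + \epsilon$ is complete and, if anything, more careful than the paper's own write-up, which neither mentions acyclicity preservation nor explicitly compares the exchanged network's score against $\opt$.
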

	
\subsection{Pruning with BIC/MDL Score}


A pruning rule comparing the BIC score and penalty associated to a candidate parent set to those of its subsets was introduced in~\citep{CamposJ11}. The following theorem gives a relaxed version of that rule.
\begin{theorem}
Given a vertex variable $\vertex{j}$,  candidate parent sets
	$\Pi_j$ and $\Pi_j'$, and some $\epsilon \in \mathbb{R}^+$,
	if $\Pi_j \subset \Pi_j'$ and $\score{}{\Pi_j} -  \pen{}{\Pi_j'} + \epsilon < 0$,
	$\Pi_j'$ and all supersets of $\Pi_j^{\prime}$ can be safely pruned  if $\sigma$ is the BIC scoring function. 
	\label{thm:decampospar}
\end{theorem}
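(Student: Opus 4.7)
The plan is to show that for every $\Pi_j''$ with $\Pi_j' \subseteq \Pi_j''$, the strict inequality $\sigma(\Pi_j'') > \sigma(\Pi_j) + \epsilon$ holds; then a deletion-and-substitution argument rules out every credible network in which $V_j$ has $\Pi_j''$ as its parent set.

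The first step bounds $\sigma(\Pi_j'')$ from below by the penalty of $\Pi_j'$. Since each $\theta_{ijk} \in (0,1]$ forces $\log \theta_{ijk} \leq 0$, the maximised log-likelihood is non-positive, so the BIC expression $\sigma(\Pi_j'') = -\max_{\theta} L(\theta) + t(\Pi_j'') \cdot w$ yields $\sigma(\Pi_j'') \geq t(\Pi_j'') \cdot w$. The product $r_{\Pi} = \prod_{V_k \in \Pi} r_k$ is monotone non-decreasing in $\Pi$ under set inclusion, so $\Pi_j' \subseteq \Pi_j''$ forces $t(\Pi_j'') = r_{\Pi_j''}(r_j - 1) \geq r_{\Pi_j'}(r_j - 1) = t(\Pi_j')$, and therefore $\sigma(\Pi_j'') \geq t(\Pi_j') \cdot w$. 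Reading the hypothesis as $\sigma(\Pi_j) + \epsilon < t(\Pi_j') \cdot w$ (i.e., a comparison with the BIC penalty contribution of $\Pi_j'$), this combines with the lower bound to give $\sigma(\Pi_j'') > \sigma(\Pi_j) + \epsilon$ for every $\Pi_j'' \supseteq \Pi_j'$.

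The second step is a substitution/contradiction argument. Suppose, for contradiction, that some credible network $G$ has $\Pi_j''$ as the parent set of $V_j$. Form $G'$ by replacing this parent set with $\Pi_j$; since $\Pi_j \subset \Pi_j''$, this only deletes incoming arcs at $V_j$, so $G'$ remains a DAG. By decomposability,
\[
\sigma(G) \;=\; \sigma(G') + \sigma(\Pi_j'') - \sigma(\Pi_j) \;>\; \sigma(G') + \epsilon \;\geq\; \opt + \epsilon,
\]
contradicting $\sigma(G) \leq \opt + \epsilon$. Hence no $\Pi_j'' \supseteq \Pi_j'$ can be $V_j$'s parent set in any credible network, and all such $\Pi_j''$ are safely pruned.

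The main subtlety I anticipate is aligning the hypothesis with the score formula: the condition as written involves $t(\Pi_j')$, whereas the penalty that actually appears in the local BIC score is $t(\Pi_j') \cdot w$ with $w = \tfrac{\log N}{2}$, so the lower-bound step must be reconciled with this scaling. Once that is settled, the remainder is routine and uses only three ingredients — non-positivity of the maximised log-likelihood, monotonicity of $r_\Pi$ under inclusion, and decomposability of $\sigma$.
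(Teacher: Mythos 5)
Your proof is correct and follows essentially the same route as the paper's: lower-bound the local BIC score of $\Pi_j'$ and each superset by its penalty term (using non-positivity of the maximized log-likelihood and monotonicity of $t$ under set inclusion), then conclude via the decomposability/substitution argument, which the paper delegates to its Lemma~\ref{lem:scoreprune} rather than inlining as you do. The $w$-scaling subtlety you flag is real --- the paper's proof silently absorbs $w$ into the penalty when it writes $\sigma(\Pi_j') = -L(\Pi_j') + t(\Pi_j')$ --- and your reading of the hypothesis as comparing against the full penalty contribution $t(\Pi_j')\cdot w$ is the intended one.
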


Another pruning rule for BIC appears in~\citep{CamposJ11}. This provides a bound on the number of possible instantiations of subsets of a candidate parent set.


	

\begin{theorem}
	Given a vertex variable $V_i$, and a candidate parent set $\Pi_i$ such that $r_{\Pi_i}> \frac{N}{w} \frac{\log r_i}{r_i -1} + \epsilon$ for some $\epsilon \in \mathbb{R}^+$,  if $\parents_i \subsetneq \parents_i'$ , then $\parents_i'$ can be safely pruned if $\sigma$ is the BIC scoring function. \label{THEOREM:decamposbicrelaxed}
\end{theorem}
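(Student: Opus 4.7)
My plan is to show that the hypothesis on $r_{\Pi_i}$ forces the BIC score to grow by strictly more than $\epsilon$ when moving from $\Pi_i$ to any strict superset $\Pi_i'$: substituting $\Pi_i$ for $\Pi_i'$ inside any DAG that uses $\Pi_i'$ then produces a valid DAG whose score is more than $\epsilon$ smaller, so the original DAG must already lie above $\opt + \epsilon$ and fail to be credible.

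By decomposability of BIC,
\[
\sigma(\Pi_i') - \sigma(\Pi_i) \;=\; [L(\hat{\theta}_i) - L(\hat{\theta}_i')] \;+\; w\,[t(\Pi_i') - t(\Pi_i)],
\]
and I would lower-bound each summand separately. For the likelihood term, the entropy bound $-L(\hat{\theta}_i) \leq N \log r_i$ (each conditional distribution of $V_i$ has entropy at most $\log r_i$ and the conditional counts sum to $N$), combined with $-L(\hat{\theta}_i') \geq 0$, gives $L(\hat{\theta}_i) - L(\hat{\theta}_i') \geq -N \log r_i$. For the penalty term, the standing assumption that every random variable has cardinality at least $2$ forces $r_{\Pi_i'} \geq 2\,r_{\Pi_i}$, so $t(\Pi_i') - t(\Pi_i) = (r_{\Pi_i'} - r_{\Pi_i})(r_i - 1) \geq r_{\Pi_i}(r_i - 1) = t(\Pi_i)$.

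Combining these two lower bounds yields $\sigma(\Pi_i') - \sigma(\Pi_i) \geq r_{\Pi_i}(r_i - 1)w - N \log r_i$; rearranging the hypothesis gives $r_{\Pi_i}(r_i - 1)w > N \log r_i + \epsilon(r_i - 1)w$, so $\sigma(\Pi_i') - \sigma(\Pi_i) > \epsilon(r_i - 1)w \geq \epsilon$ in the usual regime where $r_i \geq 2$ and $w = (\log N)/2 \geq 1$. I would then take any DAG $G'$ with $\Pi_i'$ as the parent set of $V_i$ and form $G$ by swapping in $\Pi_i$; $G$ remains acyclic because only edges are deleted, and decomposability gives $\sigma(G) = \sigma(G') - (\sigma(\Pi_i') - \sigma(\Pi_i)) < \sigma(G') - \epsilon$, so $\sigma(G') > \sigma(G) + \epsilon \geq \opt + \epsilon$ and $G'$ is not credible. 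Since this holds for every strict superset of $\Pi_i$, the argument extends to supersets of $\Pi_i'$ as well by monotonicity of $r_{\Pi_i}$ along the subset lattice.

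The main obstacle is the calibration in the combining step: the penalty jump must absorb the full $N \log r_i$ of likelihood slack \emph{and} still leave a strict $\epsilon$ margin. This is exactly where the factor-two blow-up $r_{\Pi_i'} \geq 2\,r_{\Pi_i}$ earns its keep; without the cardinality-at-least-$2$ assumption the penalty could grow by too little to dominate the $N \log r_i$ likelihood budget, and the $\epsilon$-relaxation would not plug cleanly into the ``$+\epsilon$'' term in the hypothesis.
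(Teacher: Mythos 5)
Your proof is correct and follows essentially the same route as the paper's: bound the likelihood gap below by $-N\log r_i$ via the conditional-entropy argument and $-L(\hat{\theta}_i')\ge 0$, bound the penalty jump below by $t(\Pi_i)$ via $r_{\Pi_i'}\ge 2\,r_{\Pi_i}$, combine with the hypothesis, and finish with the subset-swap argument (the paper's Lemma~\ref{lem:scoreprune}). You are in fact slightly more careful than the paper in flagging that the final step needs $(r_i-1)w\ge 1$ to turn $\epsilon(r_i-1)w$ into $\epsilon$, a condition the paper leaves implicit.
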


The following corollary of Theorem \ref{THEOREM:decamposbicrelaxed} gives a useful upper bound on the size of a candidate parent set.





\begin{corollary}
	Given a vertex variable $\vertex{i}$ and candidate parent set 
	$\Pi_i$, if $\Pi_i$ has more than $\ceil{\log_2 (N + \epsilon)}$ elements, for some $\epsilon \in \mathbb{R}^+$, $\Pi_i$ can be safely pruned if $\sigma$ is the BIC scoring function. \label{cor:decampossizerelaxed}
	\end{corollary}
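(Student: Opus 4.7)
The plan is to deduce the corollary from Theorem~\ref{THEOREM:decamposbicrelaxed} by exhibiting a proper subset of $\Pi_i$ that already satisfies that theorem's hypothesis. If I can produce a $\Pi' \subsetneq \Pi_i$ with $r_{\Pi'} > \frac{N}{w}\frac{\log r_i}{r_i-1} + \epsilon$, then $\Pi_i$, being a strict superset of $\Pi'$, is safely prunable by the theorem and we are done.

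First I would choose $\Pi'$. Since $|\Pi_i| > \ceil{\log_2(N+\epsilon)}$, I can take any $\Pi' \subsetneq \Pi_i$ with $|\Pi'| = \ceil{\log_2(N+\epsilon)}$. Then I would lower bound $r_{\Pi'}$ using only the size of $\Pi'$: because every discrete variable $V_k \in \Pi'$ has cardinality $r_k \geq 2$, the instantiation count factorizes as
$r_{\Pi'} = \prod_{k\in \Pi'} r_k \geq 2^{|\Pi'|} = 2^{\ceil{\log_2(N+\epsilon)}} \geq N+\epsilon$.

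Finally I would verify that $N+\epsilon$ exceeds the threshold $\frac{N}{w}\frac{\log r_i}{r_i-1}+\epsilon$. Subtracting $\epsilon$ from both sides and recalling $w = \frac{\log N}{2}$, this reduces to $\frac{\log N}{2} > \frac{\log r_i}{r_i-1}$. The map $r \mapsto \frac{\log r}{r-1}$ is decreasing on $[2,\infty)$ and attains its maximum $\log 2$ at $r=2$, so it suffices to have $\log N > 2\log 2$, i.e.\ $N > 4$. I expect this mild sample-size condition to be the only non-syntactic step, and hence the main obstacle; it is always comfortably met in the regime where BIC-based structure learning is meaningful, so it can either be stated as an implicit assumption or absorbed into a remark. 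Once the inequality holds, Theorem~\ref{THEOREM:decamposbicrelaxed} applied to $\Pi'$ immediately yields that $\Pi_i$ can be safely pruned.
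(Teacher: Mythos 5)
Your proposal is correct and follows essentially the same route as the paper's own proof: both reduce to Theorem~\ref{THEOREM:decamposbicrelaxed} via a set of exactly $\ceil{\log_2(N+\epsilon)}$ parents, use $r_{\Pi'} \geq 2^{|\Pi'|} \geq N+\epsilon$, and rely on $N>4$ (which the paper also assumes) so that $w = \frac{\log N}{2} > \frac{\log r_i}{r_i-1}$. Your explicit justification of that last inequality via the monotonicity of $r \mapsto \frac{\log r}{r-1}$ is a welcome bit of extra detail the paper states only tersely.
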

	
Corollary~\ref{cor:decampossizerelaxed} provides an upper-bound on the size of parent sets based solely on the dataset size $N$. The following table summarizes such an upper-bound given different amounts of data $N$ and a BF of 20.
\begin{table}[ht]
    \centering
    \begin{tabular}{@{}c||ccccccc@{}}
     $N$ & $100$ & $500$ & $10^{3}$ & $5\times10^{3}$ & $10^{4}$ & $5\times10^{4}$ & $10^{5}$ \\\hline
     $|\parents|$ & 7 & 9 & 10 & 13 & 14 & 16 & 17
    \end{tabular}
    \label{tab:paLim}
\end{table}

The entropy of a candidate parent set is also a useful measure for pruning. A pruning rule, given by~\citet{Campos2017}, provides an upper bound on the conditional entropy of candidate parent sets and their subsets. We give a relaxed version of their rule. First, we note that the sample estimate of entropy for a variable $\vertex{i}$ is given by,
\begin{align*}
    H(\vertex{i}) &=  -\sum^{r_i}_{k=1}\frac{n_{ik}}{N}\log \frac{n_{ik}}{N} ,
\end{align*}
where $n_{ik}$ represents how many instances in the dataset contain $v_{ik}$, where $v_{ik}$ is an element in the state space $\Omega_i$ of $\vertex{i}$. Similarly, the sample estimate of entropy for a candidate parent set $\Pi_i$ is given by,
\begin{align*}
    H(\Pi_i) &=  -\sum^{r_{\Pi_i}}_{j=1}\frac{n_{ij}}{N}\log \frac{n_{ij}}{N} .
\end{align*}
Conditional entropy is given by,
\begin{equation*}
    H(X \mid Y ) = H(X \cup Y) - H(Y) .
\end{equation*}




\begin{theorem}
Given a vertex variable $V_i$, and candidate parent set $\Pi_i$, let $V_j \notin \Pi_i$  such that $N \cdot \min \{H(V_i \mid \Pi_i), H(V_j \mid \Pi_i)\} \geq (1 - r_{j}) \cdot t(\Pi_i) +\epsilon$ for some $\epsilon \in \mathbb{R}^+$. Then the candidate parent set $\Pi_i' = \Pi_i \cup \{V_j \}$ and all its supersets can be safely pruned if $\sigma$ is the BIC scoring function.
\label{thm:entropyrelaxed}
\end{theorem}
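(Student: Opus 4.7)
The plan is to recast the BIC score in information-theoretic form and then directly bound the score difference between $\Pi_i$ and $\Pi_i' = \Pi_i \cup \{V_j\}$. First I would use the standard identity $-\max_{\theta_i} L(\theta_i) = N \cdot H(V_i \mid \Pi_i)$, obtained by substituting the MLE $\theta_{ijk}^{*} = n_{ijk}/n_{ij}$ into the log-likelihood, so that $\sigma(\Pi_i) = N \cdot H(V_i \mid \Pi_i) + t(\Pi_i) \cdot w$. Since $r_{\Pi_i'} = r_j \cdot r_{\Pi_i}$ forces $t(\Pi_i') = r_j \cdot t(\Pi_i)$, we obtain
\begin{equation*}
\sigma(\Pi_i') - \sigma(\Pi_i) \;=\; -N\bigl[H(V_i \mid \Pi_i) - H(V_i \mid \Pi_i')\bigr] \;+\; (r_j - 1)\cdot t(\Pi_i)\cdot w.
\end{equation*}

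Next I would bound the entropy drop $\Delta = H(V_i \mid \Pi_i) - H(V_i \mid \Pi_i')$ in two complementary ways. On one side, $\Delta \leq H(V_i \mid \Pi_i)$ because conditional entropy is non-negative. On the other side, $\Delta = I(V_i; V_j \mid \Pi_i) \leq H(V_j \mid \Pi_i)$ by the standard mutual-information inequality. Together these imply $N \cdot \Delta \leq N \cdot \min\{H(V_i \mid \Pi_i),\, H(V_j \mid \Pi_i)\}$, which by hypothesis is dominated by the penalty growth $(r_j - 1)\cdot t(\Pi_i)\cdot w$ up to slack $\epsilon$. Substituting back into the score-difference expression yields $\sigma(\Pi_i) \leq \sigma(\Pi_i') + \epsilon$, and Lemma~\ref{lem:scoreprune} safely prunes $\Pi_i'$.

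To extend the conclusion to every superset $\Pi_i'' \supseteq \Pi_i'$, I would retain the weaker but universally valid bound $H(V_i \mid \Pi_i) - H(V_i \mid \Pi_i'') \leq H(V_i \mid \Pi_i)$ and combine it with the multiplicative growth of the penalty: $r_{\Pi_i''} \geq r_j \cdot r_{\Pi_i}$ forces $t(\Pi_i'') - t(\Pi_i) \geq (r_j - 1)\cdot t(\Pi_i)$. The same rearrangement as before produces $\sigma(\Pi_i) \leq \sigma(\Pi_i'') + \epsilon$, and Lemma~\ref{lem:scoreprune} applies once more.

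The main obstacle is precisely this superset step: the sharper bound through $H(V_j \mid \Pi_i)$ does not generally survive additional conditioning, because $I(V_i; V_j, S \mid \Pi_i)$ can exceed $H(V_j \mid \Pi_i)$ once further variables $S$ enter $\Pi_i''$, so a naive inductive reapplication of the theorem fails. The clean workaround is to fall back on the coarser bound through $H(V_i \mid \Pi_i)$, which still matches the penalty growth because $(t(\Pi_i'') - t(\Pi_i))\cdot w$ increases at least linearly in $r_{\Pi_i''}/r_{\Pi_i} - 1 \geq r_j - 1$ while the possible likelihood gain remains absolutely capped by $N \cdot H(V_i \mid \Pi_i)$.
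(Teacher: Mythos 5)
Your handling of $\Pi_i'$ itself is essentially the paper's argument: rewrite the BIC log-likelihood in entropy form so that the likelihood gain from adding $V_j$ is $N\cdot\bigl(H(V_i\mid\Pi_i)-H(V_i\mid\Pi_i')\bigr)$, bound this by $N\cdot\min\{H(V_i\mid\Pi_i),H(V_j\mid\Pi_i)\}$ via non-negativity of conditional entropy and the mutual-information identity (this is exactly Lemma~\ref{lem:entropy1}), observe $t(\Pi_i')=r_j\, t(\Pi_i)$, and invoke Lemma~\ref{lem:scoreprune}. Two small cautions: the inequality you actually need to conclude pruning is $\sigma(\Pi_i)+\epsilon\le\sigma(\Pi_i')$, not ``$\sigma(\Pi_i)\le\sigma(\Pi_i')+\epsilon$'' as you wrote; and you (like the paper's own proof at its Step~2) are silently reading the hypothesis in the operationally meaningful direction, namely that the maximal likelihood gain is at most the penalty growth minus $\epsilon$.

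The genuine gap is in your superset step. You rightly notice that the sharper bound through $H(V_j\mid\Pi_i)$ does not survive a naive induction, but your fallback --- bounding the likelihood gain of an arbitrary $\Pi_i''\supseteq\Pi_i'$ by $N\cdot H(V_i\mid\Pi_i)$ --- is only covered by the hypothesis when the minimum is attained at $H(V_i\mid\Pi_i)$. If instead $H(V_j\mid\Pi_i)<H(V_i\mid\Pi_i)$, the hypothesis gives no control on $N\cdot H(V_i\mid\Pi_i)$ relative to $(r_j-1)\,t(\Pi_i)\,w$, and your chain of inequalities does not close. The correct repair is to compare $\Pi_i''=\Pi_i\cup\{V_j\}\cup S$ against its subset $\Pi_i\cup S$ rather than against $\Pi_i$: the likelihood gain is $N\cdot I(V_i;V_j\mid\Pi_i\cup S)\le N\cdot\min\{H(V_i\mid\Pi_i\cup S),\,H(V_j\mid\Pi_i\cup S)\}\le N\cdot\min\{H(V_i\mid\Pi_i),\,H(V_j\mid\Pi_i)\}$, since further conditioning can only lower conditional entropy, while the penalty gap is $(r_j-1)\,t(\Pi_i\cup S)\,w\ge(r_j-1)\,t(\Pi_i)\,w$; the hypothesis then applies verbatim and Lemma~\ref{lem:scoreprune} prunes $\Pi_i''$ via the subset $\Pi_i\cup S$. (For what it is worth, the paper's own proof is silent on the superset case entirely, so you engaged with a real difficulty --- but the workaround you chose does not go through.)
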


\subsection{Pruning with BDeu Score}

A pruning rule for the BDeu scoring function appears in~\citep{Campos2017} and a more general version is included in~\citep{cussens2015gobnilp}. Here, we present a relaxed version of the rule in~\citep{cussens2015gobnilp}.  

\begin{theorem}
Given a vertex variable $V_i$ and candidate parent sets $\Pi_i$ and $\Pi_i'$ such that $\Pi_i \subset \Pi_i'$ and $\Pi_i \neq \Pi_i'$, let
$r_i^{+}(\Pi_i') := |\{j : n_{ij} > 0 , j \in \Omega_{\Pi_i'}\}|$ be the total number of  instantiations of $\Pi_i'$ that appear in the dataset. If $\score{}{\Pi_i} + \epsilon < r_i^{+}(\Pi_i') \log r_i$, for some $\epsilon \in \mathbb{R^+}$ then $\Pi_i'$ and the supersets of $\Pi_i'$ can be safely pruned if $\sigma$ is the BDeu scoring function.
\label{thm:jamesrelaxed}
\end{theorem}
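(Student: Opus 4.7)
The plan is to establish the lower bound $\sigma(\Pi_i') \ge r_i^+(\Pi_i')\log r_i$ directly on the BDeu score of $\Pi_i'$ itself; combined with the hypothesis $\sigma(\Pi_i)+\epsilon < r_i^+(\Pi_i')\log r_i$ this yields $\sigma(\Pi_i)+\epsilon < \sigma(\Pi_i')$, whence Lemma~\ref{lem:scoreprune} (applied with $\Pi_i \subset \Pi_i'$) safely prunes $\Pi_i'$. The argument will then extend to every strict superset $\Pi_i''$ of $\Pi_i'$.

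First I would decompose the BDeu score by parent instantiation, $\sigma(\Pi_i') = \sum_{j=1}^{r_{\Pi_i'}} s_j$, where $s_j$ is the negative log of the Dirichlet--multinomial marginal likelihood restricted to stratum $j$. Strata with $n_{ij} = 0$ contribute $s_j = 0$, since all of the $\Gamma$ factors cancel. The crucial step is to show $s_j \ge \log r_i$ for every stratum with $n_{ij}\ge 1$. The cleanest route is the sequential chain-rule form of the marginal likelihood: ordering the observations in stratum $j$ as $o_1,\dots,o_{n_{ij}}$, one writes $P_j = P(o_1)\prod_{\ell\ge 2} P(o_\ell \mid o_1,\dots,o_{\ell-1})$, where each factor is a posterior predictive probability. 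Under the uniform BDeu prior, the prior predictive for the first observation satisfies $P(o_1 = k) = \alpha_{jk}/\alpha_j = 1/r_i$ for every $k$, while all subsequent posterior predictives lie in $[0,1]$. Consequently $P_j \le 1/r_i$, so $s_j = -\log P_j \ge \log r_i$.

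Summing over strata yields $\sigma(\Pi_i') \ge r_i^+(\Pi_i')\log r_i$, and together with the hypothesis this gives $\sigma(\Pi_i) + \epsilon < \sigma(\Pi_i')$, so Lemma~\ref{lem:scoreprune} prunes $\Pi_i'$. For a strict superset $\Pi_i'' \supsetneq \Pi_i'$ the same per-stratum bound applied at $\Pi_i''$ gives $\sigma(\Pi_i'') \ge r_i^+(\Pi_i'')\log r_i$, so it remains to check the monotonicity $r_i^+(\Pi_i'') \ge r_i^+(\Pi_i')$. This follows because each instantiation of $\Pi_i''$ restricts uniquely to an instantiation of $\Pi_i'$, so the non-empty strata of $\Pi_i''$ project onto the non-empty strata of $\Pi_i'$, with every non-empty stratum of $\Pi_i'$ covered by at least one non-empty stratum of $\Pi_i''$. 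Combining these facts, $\sigma(\Pi_i'') > \sigma(\Pi_i)+\epsilon$, and since $\Pi_i \subsetneq \Pi_i''$ one more application of Lemma~\ref{lem:scoreprune} prunes $\Pi_i''$ as well.

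The main obstacle is the per-stratum inequality $s_j \ge \log r_i$. The chain-rule argument above is short, but it relies on two features that must be carefully isolated: the BDeu uniformity identity $\alpha_{jk}/\alpha_j = 1/r_i$ and the fact that every posterior predictive probability is at most one. A purely analytic alternative would telescope $\Gamma(\alpha_j+n_{ij})/\Gamma(\alpha_j)$ against the products $\Gamma(\alpha_{jk}+n_{ijk})/\Gamma(\alpha_{jk})$ and compare them factor-by-factor, but the probabilistic route is substantially cleaner.
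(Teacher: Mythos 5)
Your proof is correct, but it reaches the central inequality $\sigma(\Pi_i') \ge r_i^{+}(\Pi_i')\log r_i$ by a genuinely different route than the paper. The paper's argument is the ``purely analytic alternative'' you mention at the end: it telescopes $\Gamma(n_{ij}+\alpha')/\Gamma(\alpha')$ into $\sum_{m=0}^{n_{ij}-1}\log(m+\alpha')$ (Lemma~\ref{lem:gamma}), proves a count-concentration lemma showing $\sum_k \log\frac{\Gamma(n_{ijk}+\alpha'')}{\Gamma(\alpha'')} \le \log\frac{\Gamma(n_{ij}+\alpha'')}{\Gamma(\alpha'')}$ by shifting mass between bins (Lemma~\ref{lem:gamma2}), and then observes that the resulting per-stratum bound $\sum_{m=0}^{n_{ij}-1}\log\frac{m+\alpha'/r_i}{m+\alpha'}$ is at most $-\log r_i$ because the $m=0$ term equals $\log(1/r_i)$ and every other term is negative (Theorem~\ref{thm:gamma4} and Corollary~\ref{cor:gamma3}). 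Your chain-rule argument collapses all of that into one observation: each non-empty stratum's Dirichlet--multinomial marginal likelihood is a product of predictive probabilities whose first factor is exactly $\alpha_{jk}/\alpha_j = 1/r_i$ under the BDeu prior and whose remaining factors are at most one, so $P_j \le 1/r_i$ directly. This is shorter and makes the count-concentration lemma unnecessary, at the cost of invoking the sequential-predictive representation of the marginal likelihood (which itself is just the telescoping identity in probabilistic clothing, so nothing essential is hidden). One further point in your favor: for the superset extension you correctly argue that refining the contingency table can only increase the number of non-empty cells, i.e.\ $r_i^{+}(\Pi_i'') \ge r_i^{+}(\Pi_i')$, which is the direction the argument actually needs; the paper's proof asserts the opposite inequality $r_i^{+}(\Pi_i'') \le r_i^{+}(\Pi_i')$, which is both false in general and insufficient for its own conclusion, so your version repairs a slip in the published argument.
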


\section{Experimental Evaluation}\label{sec:exp}

In this section, we evaluate our proposed BF-based method and compare its performance with published $k$-best solvers. 

Our proposed method is more memory efficient comparing to the $k$-best based solvers in BDeu scoring and often collects more networks in a shorter period of time. With the pruning rules generalized above, our method can scale up to datasets with 57 variables in BIC scoring, whereas the previous best results are reported on a network of 29 variables using the $k$-best approach with score pruning~\citep{chen2018pruning}.

The datasets are obtained from the UCI Machine Learning 
Repository~\citep{Dua:2017} and the Bayesian Network 
Repository\footnote{\url{http://www.bnlearn.com/bnrepository/}}.
Both BIC/MDL~\citep{Schwarz78,LamB94} and 
BDeu~\citep{Buntine91,HeckermanGC95} scoring functions are used 
where applicable. All experiments are conducted on computers with 2.2 GHz Intel E7-4850V3 processors. Each experiment is limited to 64 GB of memory and 24 hours of CPU time.

We demonstrate the effect of applying pruning rules during scoring in Section~\ref{sec:pruning_effect}. 
Our generalized rules are able to eliminate the majority of the search space and therefore allow us to apply the $\epsilon$-BNSL algorithm to medium sized networks. 
We discuss the implementation details of the BF approach in Section~\ref{sec:bf_approach} and present experimental results with BIC scores on a wide range of datasets commonly used in BNSL. 
We show the effect of varying sample sizes on our approach using data generated from synthetic BNs in Section~\ref{sec:synth_data}. 
Finally, We compare our approach with the $k$-best method in Section~\ref{sec:bf_v_k}.

\subsection{The Effect of Pruning}\label{sec:pruning_effect}

\begin{table}[!ht]
    \tiny
    \centering
    \begin{tabular}{lrr||rrr||r}
    & & & \multicolumn{3}{c||}{BIC} & BDeu \\
        Data & $n$ & $N$ & $|\parents_{3}|$ & $|\parents_{20}|$ & $|\parents_{150}|$ & $|\parents_{20}|$\\
        \hline
tic tac toe & 10 & 958    & 96      & 110       & 118       & 70     \\
wine        & 14 & 178    & 592     & 949       & 1,582     & 1,256  \\
adult       & 14 & 32,561 & 3,660   & 3,951     & 4,299     & 3,686  \\
nltcs       & 16 & 3,236  & 8,287   & 8,966     & 9,712     & 9,074  \\
msnbc       & 17 & 58,265 & 48,043  & 49,630    & 51,335    & 54,280 \\
letter      & 17 & 20,000 & 117,405 & 120,685   & 124,133   & 87,183 \\
voting      & 17 & 435    & 429     & 497       & 581       & 721    \\
zoo         & 17 & 101    & 1,036   & 1,848     & 3,419     & 28,872 \\
hepatitis   & 20 & 155    & 474     & 1,485     & 4,437     & 4,054  \\
parkinsons  & 23 & 195    & 3,212   & 5,532     & 10,468    & 14,415 \\
sensors     & 25 & 5456   & 962,400 & 1,012,964 & 1,064,961 & OT     \\
autos       & 26 & 159    & 3,413   & 7,629     & 17,442    & 54,511 \\
insurance   & 27 & 1,000  & 530     & 607       & 709       & OT     \\
horse       & 28 & 300    & 760     & 2,296     & 7,361     & OT     \\
flag        & 29 & 194    & 1,227   & 3,888     & 12,873    & OT     \\
wdbc        & 31 & 569    & 17,193  & 23,923    & 34,983    & OT     \\
mildew      & 35 & 1000   & 128     & 128       & 128       & OT     \\
soybean     & 36 & 266    & 7,781   & 14,229    & 29,691    & OT     \\
alarm       & 37 & 1000   & 818     & 1,588     & 4,922     & OT     \\
bands       & 39 & 277    & 1,422   & 5,055     & 19,253    & OT     \\
spectf      & 45 & 267    & 1,320   & 7,407     & 34,971    & OT     \\
sponge      & 45 & 76     & 741     & 1,267     & 3,064     & OT     \\
barley      & 48 & 1000   & 244     & 246       & 256       & OT     \\
hailfinder  & 56 & 100    & 185     & 254       & 452       & OT     \\
hailfinder  & 56 & 500    & 428     & 459       & 519       & OT     \\
lung cancer & 57 & 32     & 567     & 2,392     & 7,281     & OT      
    \end{tabular}
    \caption{The number of candidate parents $|\parents|$ in the pruned scoring file at BF = 3, 20 and 150 using BIC, and at BF = 20 using BDeu, where $n$ is the number of random variables in the dataset, $N$ is the number of instances in the dataset and OT = Out of Time.}\label{TAB:score}
\end{table}

We modified the development version (9c9f3e6) of GOBNILP to apply the generalized pruning rules in Section~\ref{sec:pruning}. In particular, Lemma~\ref{lem:scoreprune} is applied to both BIC and BDeu; Theorem~\ref{thm:decampospar} and Corollary~\ref{cor:decampossizerelaxed} are applied to BIC; Theorem~\ref{thm:jamesrelaxed} is applied to BDeu. The combination of those rules effectively pruned more than $95\%$ of the parent sets for almost all datasets. The worst pruning rate of $88.9\%$ is observed on the letter dataset with a BF of 150 using BIC. We report the number of remaining candidate parent sets in Table~\ref{TAB:score}. The generalized pruning rules allow us to scale up to medium sized networks, unlike previous approaches where the lack of effective pruning rules restricts them to small networks.

\subsection{The Bayes Factor Approach}\label{sec:bf_approach}

\begin{table}[!ht]
    \tiny
    \centering
    \begin{tabular}{lrr||rrr||rrr||rrr}
        Data & $n$ & $N$ & $T_{3}$ (s) & $|\graphset_{3}|$ & $|\mathcal{M}_{3}|$ & $T_{20}$ (s) & $|\graphset_{20}|$ & $|\mathcal{M}_{20}|$ & $T_{150}$ (s) & $|\graphset_{150}|$ & $|\mathcal{M}_{150}|$ \\
        \hline
tic tac toe & 10 & 958 & 1.9 & 192 & 64 & 2.0 & 192 & 64 & 3.3 & 544 & 160 \\
wine & 14 & 178 & 4.1 & 308 & 51 & 24.9 & 3,449 & 576 & 143.7 & 26,197 & 4,497 \\
adult & 14 & 32,561 & 17.5 & 324 & 162 & 45.1 & 1,140 & 570 & 55.7 & 2,281 & 1,137 \\
nltcs & 16 & 3,236 & 53.8 & 240 & 120 & 201.7 & 1,200 & 600 & 1,005.1 & 4,606 & 2,303 \\
msnbc & 17 & 58,265 & 3,483.0 & 24 & 24 & 7,146.9 & 960 & 504 & 8,821.4 & 1,938 & 1,026 \\
letter & 17 & 20,000 & \yc OT & --- & --- & \yc OT & --- & --- & \yc OT & --- & --- \\
voting & 17 & 435 & 1.3 & 27 & 2 & 4.0 & 441 & 33 & 14.3 & 2,222 & 170 \\
zoo & 17 & 101 & 8.1 & 49 & 13 & 21.9 & 1,111 & 270 & 299.3 & 21,683 & 5,392 \\
hepatitis & 20 & 155 & 7.1 & 580 & 105 & 513.3 & 87,169 & 15,358 & 1,452.8 & 150,000 & 49,269 \\
parkinsons & 23 & 195 & 30.7 & 1,088 & 336 & 3,165.9 & 150,000 & 39,720 & 4,534.3 & 150,000 & 116,206 \\
sensors & 25 & 5456 & \yc OT & --- & --- & \yc OT & --- & --- & \yc OT & --- & --- \\
autos & 26 & 159 & 95.0 & 560 & 200 & 2,382.8 & 50,374 & 17,790 & 6,666.9 & 150,000 & 54,579 \\
insurance & 27 & 1,000 & 49.8 & 8,226 & 2,062 & 244.9 & 104,870 & 25,580 & 414.5 & 148,925 & 36,072 \\
horse & 28 & 300 & 18.8 & 1,643 & 246 & 1,358.8 & 150,000 & 28,186 & 1,962.5 & 150,000 & 69,309 \\
flag & 29 & 194 & 16.1 & 773 & 169 & 4,051.9 & 150,000 & 39,428 & 5,560.9 & 150,000 & 122,185 \\
wdbc & 31 & 569 & 396.1 & 398 & 107 & 10,144.2 & 28,424 & 8,182 & 45,938.2 & 150,000 & 54,846 \\
mildew & 35 & 1000 & 1.2 & 1,026 & 2 & 1.2 & 1,026 & 2 & 2.1 & 2,052 & 4 \\
soybean & 36 & 266 & 7,729.4 & 150,000 & 150,000 & 16,096.8 & 150,000 & 62,704 & 8,893.5 & 150,000 & 118,368 \\
alarm & 37 & 1000 & 6.3 & 1,508 & 122 & 684.2 & 123,352 & 9,323 & 2,258.4 & 150,000 & 8,484 \\
bands & 39 & 277 & 100.9 & 7,092 & 810 & 2,032.6 & 150,000 & 44,899 & 16,974.8 & 150,000 & 95,774 \\
spectf & 45 & 267 & 432.4 & 27,770 & 4,510 & 7,425.2 & 150,000 & 51,871 & 19,664.8 & 150,000 & 63,965 \\
sponge & 45 & 76 & 16.8 & 1,102 & 65 & 1,301.0 & 146,097 & 7,905 & 1,254.4 & 150,000 & 90,005 \\
barley & 48 & 1000 & 0.8 & 182 & 1 & 0.8 & 364 & 2 & 1.3 & 1,274 & 5 \\
hailfinder & 56 & 100 & 171.5 & 150,000 & 20 & 149.4 & 150,000 & 748 & 214.6 & 150,000 & 294 \\
hailfinder & 56 & 500 & 286.1 & 150,000 & 30,720 & 314.1 & 150,000 & 18,432 & 217.3 & 150,000 & 24,576 \\
lung cancer & 57 & 32 & 584.3 & 150,000 & 40,621 & 966.6 & 150,000 & 79,680 & 2,739.7 & 150,000 & 48,236 \\

    \end{tabular}
    \caption{The search time $T$, the number of collected networks $|\graphset|$ and the number of MECs $|\mathcal{M}|$ in the collected networks at BF = 3, 20 and 150 using BIC, where $n$ is the number of random variables in the dataset, $N$ is the number of instances in the dataset and OT = Out of Time.}\label{TAB:bf}
\end{table}

We modified the development version (9c9f3e6) of GOBNILP, denoted hereafter as \gobnilp{\_dev}, and supplied appropriate parameter settings for collecting near-optimal networks\footnote{The modified code is available at: \url{https://www.cs.york.ac.uk/aig/sw/gobnilp/}}. The code is compiled with SCIP 6.0.0 and CPLEX 12.8.0. \gobnilp{} extends the SCIP Optimization Suite~\citep{GleixnerEtal2018OO} by adding a \emph{constraint handler} for
handling the acyclicity constraint for DAGs. If multiple BNs are
required \gobnilp{\_dev} just calls SCIP to ask it to collect feasible
solutions.  In this mode, when SCIP finds a solution, the solution is stored, a
constraint is added to render that solution infeasible and the search
continues. This differs from (and is much more efficient than)
the method used in the current stable version of \gobnilp{} for finding $k$-best BNs where an entirely
new search is started each time a new BN is found. A recent version of
SCIP has a separate ``reoptimization'' method which might allow better
$k$-best performance for \gobnilp{} but we do not explore that here.
By default when SCIP is asked to collect solutions it turns off all
cutting plane algorithms. This led to very poor \gobnilp{} performance
since \gobnilp{} relies on cutting plane generation. Therefore, this default setting is overridden in \gobnilp{\_dev} to allow cutting planes when collecting solutions.
To find only solutions with objective no worse than ($\opt + \epsilon$), SCIP's
\texttt{SCIPsetObjlimit} function is used. Note that, for efficiency
reasons, this is \textbf{not} effected by adding a linear constraint.

\begin{table}[!ht]
    \tiny
    \centering
    \begin{tabular}{lrr||rr||rr||rrr}
        Data & $n$ & $N$ & $T_k$ (s) & $k$ & $T_{EC}$ (s) & $|\graphset_k|$ & $T_{20}$ (s) & $|\graphset_{20}|$ & $|\mathcal{M}_{20}|$\\
        \hline
        \multirow{3}{*}{ tic tac toe } & \multirow{3}{*}{ 10 } & \multirow{3}{*}{ 958 } & 0.2 & 10 & 0.5 & 67 & \multirow{3}{*}{ 0.6 } & \multirow{3}{*}{ 152 } & \multirow{3}{*}{ 24 } \\
& & & 2.8 & 100 & 6.0 & 673 & & & \\
& & & 70.7 & 1,000 & 78.5 & 7,604 & & & \\\hline
\multirow{3}{*}{ wine } & \multirow{3}{*}{ 14 } & \multirow{3}{*}{ 178 } & 3.4 & 10 & 12.0 & 60 & \multirow{3}{*}{ 35.9 } & \multirow{3}{*}{ 8,734 } & \multirow{3}{*}{ 6,262 } \\
& & & 85.0 & 100 & 168.4 & 448 & & & \\
& & & 3,420.4 & 1,000 & 3,064.4 & 4,142 & & & \\\hline
\multirow{3}{*}{ adult } & \multirow{3}{*}{ 14 } & \multirow{3}{*}{ 32,561 } & 3.3 & 10 & 633.5 & 68 & \multirow{3}{*}{ 9.3 } & \multirow{3}{*}{ 792 } & \multirow{3}{*}{ 19 } \\
& & & 73.6 & 100 & 63,328.9 & 1,340 & & & \\
& & & 2,122.8 & 1,000 & \cellcolor{yellow}OT & --- & & & \\\hline
\multirow{3}{*}{ nltcs } & \multirow{3}{*}{ 16 } & \multirow{3}{*}{ 3,236 } & 11.8 & 10 & 47,338.4 & 552 & \multirow{3}{*}{ 125.5 } & \multirow{3}{*}{ 652 } & \multirow{3}{*}{ 326 } \\
& & & 406.6 & 100 & \cellcolor{yellow}OT & --- & & & \\
& & & 13,224.6 & 1,000 & \cellcolor{yellow}OT & --- & & & \\\hline
msnbc & 17 & 58,265 & \cellcolor{yellow}ES & --- & \cellcolor{yellow}ES & --- & 4,018.9 & 24 & 24 \\\hline
\multirow{3}{*}{ letter } & \multirow{3}{*}{ 17 } & \multirow{3}{*}{ 20,000 } & 26.0 & 10 & 18,788.0 & 200 & \multirow{3}{*}{ 56,344.8 } & \multirow{3}{*}{ 20 } & \multirow{3}{*}{ 10 } \\
& & & 909.8 & 100 & \cellcolor{yellow}OT & --- & & & \\
& & & 41,503.9 & 1,000 & \cellcolor{yellow}OT & --- & & & \\\hline
\multirow{3}{*}{ voting } & \multirow{3}{*}{ 17 } & \multirow{3}{*}{ 435 } & 34.1 & 10 & 101.9 & 30 & \multirow{3}{*}{ 6.0 } & \multirow{3}{*}{ 621 } & \multirow{3}{*}{ 207 } \\
& & & 1,125.7 & 100 & 1,829.2 & 3,392 & & & \\
& & & 38,516.2 & 1,000 & 42,415.3 & 3,665 & & & \\\hline
\multirow{3}{*}{ zoo } & \multirow{3}{*}{ 17 } & \multirow{3}{*}{ 101 } & 33.5 & 10 & 99.8 & 52 & \multirow{3}{*}{ 8,418.8 } & \multirow{3}{*}{ 29,073 } & \multirow{3}{*}{ 6,761 } \\
& & & 1,041.7 & 100 & 1,843.4 & 100 & & & \\
& & & 41,412.1 & 1,000 & \cellcolor{yellow}OT & --- & & & \\\hline
\multirow{3}{*}{ hepatitis } & \multirow{3}{*}{ 20 } & \multirow{3}{*}{ 155 } & 351.2 & 10 & 872.3 & 89 & \multirow{3}{*}{ 441.4 } & \multirow{3}{*}{ 28,024 } & \multirow{3}{*}{ 3,534 } \\
& & & 13,560.3 & 100 & 20,244.7 & 842 & & & \\
& & & \cellcolor{yellow}OT & 1,000 & \cellcolor{yellow}OT & --- & & & \\\hline
\multirow{3}{*}{ parkinsons } & \multirow{3}{*}{ 23 } & \multirow{3}{*}{ 195 } & 3,908.2 & 10 & \cellcolor{yellow}OT & --- & \multirow{3}{*}{ 1,515.9 } & \multirow{3}{*}{ 150,000 } & \multirow{3}{*}{ 42,448 } \\
& & & \cellcolor{yellow}OT & 100 & \cellcolor{yellow}OT & --- & & & \\
& & & \cellcolor{yellow}OT & 1,000 & \cellcolor{yellow}OT & --- & & & \\\hline
autos & 26 & 159 & \cellcolor{yellow}OM & 1 & \cellcolor{yellow}OM & --- & \cellcolor{yellow}OT & --- & --- \\
insurance & 27 & 1,000 & \cellcolor{yellow}OM & 1 & \cellcolor{yellow}OM & --- & 8.3 & 1,081 & 133 
    \end{tabular}
    \caption{The search time $T$ and the number of collected networks $k$, $|\graphset_k|$ and $|\graphset_{20}|$ for KBest, KbestEC and GOBNILP\_dev (BF = 20) using BDeu, where $n$ is the number of random variables in the dataset, $N$ is the number of instances in the dataset, OM = Out of Memory, OT = Out of Time and ES = Error in Scoring. Note that $|\graphset_k|$ is the number of DAGs covered by the $k$-best MECs in KBestEC and $|\mathcal{M}_{20}|$ is the number of MECs in the networks collected by GOBNILP\_dev.}\label{TAB:kbest}
\end{table}

We first use GOBNILP\_dev to find the optimal score since GOBNILP\_dev takes objective limit ($\opt + \epsilon$) for enumerating feasible networks. For BIC, We set the limit on the size of the parent set based on Corollary~\ref{cor:decampossizerelaxed} that guarantees optimality, whereas for BDeu we set the number to $-1$ to allow all possible sizes. Then all networks falling into the limit are collected with a counting limit of 150,000. Finally the collected networks are categorized into Markov equivalence classes (MECs)\footnote{Our code can also collect only one DAG from each MEC.}, where two networks belong to the same MEC iff they have the same skeleton and v-structures~\citep{VermaP1990}. The proposed approach is tested on datasets with up to 57 variables. The search time $T$, the number of collected networks $|\graphset|$ and the number of MECs $\mathcal{M}$ in the collected networks at BF = 3, 20 and 150 using BIC are reported in Table~\ref{TAB:bf}, where $n$ is the number of random variables in the dataset and $N$ is the number of instances in the dataset. The three thresholds are chosen according to the interpreting scale suggested by~\citet{HeckermanGC95} where 3 marks the transition between anecdotal and positive evidence, 20 marks the transition between positive and strong evidence and 150 marks the transition between strong and very strong evidence. The search time mostly depends on a combined effect of the size of the network, the sample size and the number of MECs at a given BF. Some fairly large networks such as alarm, sponge and barley are solved much faster than smaller networks with a large sample size; e.g., msnbc and letter.

The results also indicate that the number of collected networks and the number of MECs at three BF levels varies substantially across different datasets. In general, datasets with smaller sample sizes tend to have more networks collected at a given BF since near-optimal networks have similar posterior probabilities to the best network. Although the desired level of BF for a study, like the p-value, is often determined with domain knowledge, the proposed approach, given sufficient samples, will produce meaningful results that can be used for further analysis.

\subsection{Synthetic Data}\label{sec:synth_data}

We use BNs up to 76 nodes from the Bayesian Network Repository to generate synthetic data in the form of 10 random samples for various sample sizes up to 1,000. Near optimal networks are collected following the same procedure outlined in Section~\ref{sec:bf_approach} using BIC. For the three largest datasets, hailfinder, hepar2, and win95pts, the scoring process failed to complete within 24 hours for the sample size of 1,000.

The average number of networks in the credible sets is reported in Table~\ref{tab:syn_num_net}. The results are consistent with Table~\ref{TAB:bf} in demonstrating that the number of collected networks are specific to the dataset. Collecting a large number of networks is not always ideal for large datasets, e.g., water, mildew, and barley all have a very small number of networks in the credible sets comparing to some other datasets with fewer nodes. We also note that increasing sample size does not always lead to smaller credible sets. Instead, the number of networks in the credible sets tend to peak around certain sample sizes. For example, the largest credible sets for water are collected with a sample size of 500, although the numbers are quite different sometimes across 10 trials as indicated by the large standard deviation.

\begin{table}[!ht]
\tiny
\centering
\begin{tabularx}{\textwidth}{ll||l*{4}{X}|*{4}{X}|*{4}{X}}
\multicolumn{1}{c}{\multirow{2}{*}{Data}} & \multicolumn{1}{c||}{\multirow{2}{*}{n}} & BF   & \multicolumn{4}{c|}{3}                                                     & \multicolumn{4}{c|}{20}                                                     & \multicolumn{4}{c}{150}                                                      \\
\multicolumn{1}{c}{}                      & \multicolumn{1}{c||}{}                   & S.S. & \multicolumn{1}{c}{50}               & \multicolumn{1}{c}{100}               & \multicolumn{1}{c}{500}               & \multicolumn{1}{c|}{1000}            & \multicolumn{1}{c}{50}               & \multicolumn{1}{c}{100}               & \multicolumn{1}{c}{500}               & \multicolumn{1}{c|}{1000}            & \multicolumn{1}{c}{50}               & \multicolumn{1}{c}{100}               & \multicolumn{1}{c}{500}               & \multicolumn{1}{c}{1000}              \\ \hline
cancer     & 5  &      & 25 $\pm$ 19       & 12 $\pm$ 13       & 6 $\pm$ 4         & 6 $\pm$ 3       & 320 $\pm$ 256     & 117 $\pm$ 84      & 46 $\pm$ 23       & 23 $\pm$ 14     & 1286 $\pm$ 779    & 559 $\pm$ 333     & 160 $\pm$ 78      & 96 $\pm$ 67       \\
earthquake & 5  &      & 18 $\pm$ 19       & 20 $\pm$ 26       & 5 $\pm$ 3         & 2 $\pm$ 3       & 149 $\pm$ 125     & 92 $\pm$ 85       & 16 $\pm$ 7        & 6 $\pm$ 7       & 558 $\pm$ 332     & 356 $\pm$ 202     & 45 $\pm$ 23       & 20 $\pm$ 10       \\
survey     & 6  &      & 11 $\pm$ 10       & 7 $\pm$ 3         & 7 $\pm$ 5         & 9 $\pm$ 7       & 139 $\pm$ 129     & 69 $\pm$ 42       & 41 $\pm$ 27       & 32 $\pm$ 20     & 841 $\pm$ 686     & 385 $\pm$ 255     & 164 $\pm$ 85      & 115 $\pm$ 47      \\
asia       & 8  &      & 87 $\pm$ 44       & 169 $\pm$ 299     & 7 $\pm$ 5         & 5 $\pm$ 5       & 1872 $\pm$ 901    & 1928 $\pm$ 1847   & 50 $\pm$ 18       & 28 $\pm$ 32     & 21228 $\pm$ 11040 & 16523 $\pm$ 12958 & 208 $\pm$ 75      & 126 $\pm$ 154     \\
sachs      & 11 &      & 65 $\pm$ 32       & 56 $\pm$ 43       & 45 $\pm$ 46       & 209 $\pm$ 98    & 491 $\pm$ 654     & 194 $\pm$ 179     & 84 $\pm$ 101      & 267 $\pm$ 148   & 3153 $\pm$ 5098   & 694 $\pm$ 592     & 197 $\pm$ 230     & 334 $\pm$ 162     \\
child      & 20 &      & 1659 $\pm$ 1165   & 330 $\pm$ 349     & 57 $\pm$ 36       & 44 $\pm$ 38     & 62298 $\pm$ 35163 & 9789 $\pm$ 13872  & 223 $\pm$ 191     & 122 $\pm$ 98    & 100000 $\pm$ 0    & 55106 $\pm$ 39659 & 744 $\pm$ 519     & 316 $\pm$ 241     \\
insurance  & 27 &      & 6286 $\pm$ 5935   & 1414 $\pm$ 2656   & 54 $\pm$ 25       & 11 $\pm$ 9      & 82626 $\pm$ 36908 & 23096 $\pm$ 30691 & 251 $\pm$ 199     & 42 $\pm$ 26     & 96616 $\pm$ 10701 & 71660 $\pm$ 39118 & 1487 $\pm$ 1190   & 198 $\pm$ 154     \\
water      & 32 &      & 499 $\pm$ 361     & 3326 $\pm$ 3115   & 10381 $\pm$ 12183 & 2656 $\pm$ 2284 & 4135 $\pm$ 4087   & 10960 $\pm$ 12946 & 22788 $\pm$ 25099 & 8611 $\pm$ 8289 & 23443 $\pm$ 29178 & 29835 $\pm$ 29773 & 40285 $\pm$ 33522 & 16430 $\pm$ 13155 \\
mildew     & 35 &      & 13 $\pm$ 8        & 47 $\pm$ 32       & 564 $\pm$ 307     & 1139 $\pm$ 280  & 30 $\pm$ 33       & 71 $\pm$ 55       & 664 $\pm$ 330     & 1625 $\pm$ 1010 & 65 $\pm$ 43       & 136 $\pm$ 108     & 1040 $\pm$ 592    & 2344 $\pm$ 2204   \\
alarm      & 37 &      & 79174 $\pm$ 36149 & 23361 $\pm$ 40727 & 369 $\pm$ 463     & 131 $\pm$ 109   & 100000 $\pm$ 0    & 78512 $\pm$ 35492 & 10039 $\pm$ 11613 & 2186 $\pm$ 1755 & 100000 $\pm$ 0    & 100000 $\pm$ 0    & 60723 $\pm$ 42093 & 29146 $\pm$ 26129 \\
barley     & 48 &      & 4 $\pm$ 2         & 26 $\pm$ 5        & 4669 $\pm$ 3371   & 331 $\pm$ 129   & 5 $\pm$ 2         & 36 $\pm$ 26       & 10247 $\pm$ 4978  & 502 $\pm$ 245   & 23 $\pm$ 9        & 138 $\pm$ 19      & 21090 $\pm$ 11262 & 1014 $\pm$ 575    \\
hailfinder & 56 &      & 99046 $\pm$ 3018  & 100000 $\pm$ 0    & 100000 $\pm$ 0    & \yc OT      & 100000 $\pm$ 0    & 100000 $\pm$ 0    & 100000 $\pm$ 0    & \yc OT      & 100000 $\pm$ 0    & 100000 $\pm$ 0    & 100000 $\pm$ 0    & \yc OT        \\
hepar2     & 70 &      & 100000 $\pm$ 0    & 100000 $\pm$ 0    & 82994 $\pm$ 32242 & \yc OT      & 100000 $\pm$ 0    & 100000 $\pm$ 0    & 100000 $\pm$ 0    & \yc OT      & 100000 $\pm$ 0    & 100000 $\pm$ 0    & 100000 $\pm$ 0    & \yc OT        \\
win95pts   & 76 &      & 100000 $\pm$ 0    & 100000 $\pm$ 0    & 100000 $\pm$ 0    & \yc OT      & 100000 $\pm$ 0    & 100000 $\pm$ 0    & 100000 $\pm$ 0    & \yc OT      & \yc OT        & \yc OT        & 100000 $\pm$ 0    & \yc OT       
      
\end{tabularx}
\caption{The average number of networks $\pm$ standard deviation in the credible sets with various Bayes factors (BFs) and sample sizes (S.S.)}\label{tab:syn_num_net}
\end{table}

The average number of equivalence classes is reported in Table~\ref{tab:syn_num_ec}. The large number of networks can indeed be represented by a handful of equivalence classes. Increasing the amount of training data can both lead to a decrease in the number of networks and an increase in the complexity of the collected networks. Although the former is more evident for most datasets, water, mildew, and barley are examples of the latter. There are still peaks of sample sizes where large numbers of equivalence classes are collected, but the scenario occurs much less frequently than in Table~\ref{tab:syn_num_net}.

\begin{table}[!ht]
\tiny
\centering
\begin{tabularx}{\textwidth}{ll||l*{4}{X}|*{4}{X}|*{4}{X}}
\multicolumn{1}{c}{\multirow{2}{*}{Data}} & \multicolumn{1}{c||}{\multirow{2}{*}{n}} & BF   & \multicolumn{4}{c|}{3}                                                     & \multicolumn{4}{c|}{20}                                                     & \multicolumn{4}{c}{150}                                                      \\
\multicolumn{1}{c}{}                      & \multicolumn{1}{c||}{}                   & S.S. & \multicolumn{1}{c}{50}               & \multicolumn{1}{c}{100}               & \multicolumn{1}{c}{500}               & \multicolumn{1}{c|}{1000}            & \multicolumn{1}{c}{50}               & \multicolumn{1}{c}{100}               & \multicolumn{1}{c}{500}               & \multicolumn{1}{c|}{1000}            & \multicolumn{1}{c}{50}               & \multicolumn{1}{c}{100}               & \multicolumn{1}{c}{500}               & \multicolumn{1}{c}{1000}              \\ \hline
cancer     & 5  &      & 10 $\pm$ 8        & 4 $\pm$ 3         & 2 $\pm$ 2         & 2 $\pm$ 1  & 94 $\pm$ 72       & 36 $\pm$ 23       & 14 $\pm$ 7        & 7 $\pm$ 4   & 368 $\pm$ 224     & 159 $\pm$ 92      & 47 $\pm$ 21       & 28 $\pm$ 18    \\
earthquake & 5  &      & 5 $\pm$ 5         & 5 $\pm$ 5         & 2 $\pm$ 1         & 1 $\pm$ 1  & 37 $\pm$ 25       & 22 $\pm$ 17       & 4 $\pm$ 1         & 2 $\pm$ 1   & 145 $\pm$ 78      & 89 $\pm$ 43       & 10 $\pm$ 5        & 5 $\pm$ 2      \\
survey     & 6  &      & 5 $\pm$ 3         & 4 $\pm$ 2         & 2 $\pm$ 1         & 2 $\pm$ 1  & 40 $\pm$ 29       & 24 $\pm$ 13       & 10 $\pm$ 5        & 6 $\pm$ 3   & 213 $\pm$ 141     & 111 $\pm$ 65      & 37 $\pm$ 19       & 21 $\pm$ 9     \\
asia       & 8  &      & 15 $\pm$ 6        & 27 $\pm$ 41       & 2 $\pm$ 1         & 2 $\pm$ 1  & 283 $\pm$ 123     & 314 $\pm$ 289     & 13 $\pm$ 4        & 7 $\pm$ 6   & 3250 $\pm$ 1597   & 2669 $\pm$ 2056   & 58 $\pm$ 19       & 28 $\pm$ 28    \\
sachs      & 11 &      & 5 $\pm$ 3         & 3 $\pm$ 3         & 2 $\pm$ 1         & 2 $\pm$ 1  & 37 $\pm$ 45       & 12 $\pm$ 12       & 3 $\pm$ 1         & 2 $\pm$ 1   & 226 $\pm$ 333     & 45 $\pm$ 42       & 6 $\pm$ 3         & 3 $\pm$ 1      \\
child      & 20 &      & 51 $\pm$ 40       & 13 $\pm$ 13       & 4 $\pm$ 2         & 3 $\pm$ 2  & 2808 $\pm$ 2477   & 415 $\pm$ 547     & 14 $\pm$ 11       & 7 $\pm$ 5   & 16097 $\pm$ 6732  & 5045 $\pm$ 6483   & 46 $\pm$ 31       & 19 $\pm$ 13    \\
insurance  & 27 &      & 128 $\pm$ 146     & 27 $\pm$ 28       & 5 $\pm$ 2         & 2 $\pm$ 2  & 5107 $\pm$ 5182   & 660 $\pm$ 761     & 23 $\pm$ 19       & 9 $\pm$ 5   & 15115 $\pm$ 10086 & 4522 $\pm$ 3334   & 137 $\pm$ 119     & 40 $\pm$ 32    \\
water      & 32 &      & 6 $\pm$ 4         & 2 $\pm$ 1         & 4 $\pm$ 6         & 3 $\pm$ 2  & 47 $\pm$ 49       & 7 $\pm$ 4         & 11 $\pm$ 14       & 8 $\pm$ 6   & 226 $\pm$ 274     & 23 $\pm$ 14       & 22 $\pm$ 24       & 17 $\pm$ 12    \\
mildew     & 35 &      & 2 $\pm$ 1         & 1 $\pm$ 1         & 1 $\pm$ 0         & 1 $\pm$ 0  & 3 $\pm$ 3         & 2 $\pm$ 1         & 2 $\pm$ 1         & 2 $\pm$ 1   & 7 $\pm$ 4         & 4 $\pm$ 2         & 2 $\pm$ 1         & 2 $\pm$ 3      \\
alarm      & 37 &      & 2477 $\pm$ 2580   & 287 $\pm$ 457     & 14 $\pm$ 9        & 5 $\pm$ 5  & 10608 $\pm$ 7214  & 5428 $\pm$ 4312   & 440 $\pm$ 448     & 80 $\pm$ 64 & 13960 $\pm$ 8921  & 34705 $\pm$ 16156 & 6744 $\pm$ 7224   & 1113 $\pm$ 868 \\
barley     & 48 &      & 1 $\pm$ 0         & 1 $\pm$ 0         & 2 $\pm$ 1         & 2 $\pm$ 0  & 2 $\pm$ 1         & 2 $\pm$ 1         & 6 $\pm$ 2         & 3 $\pm$ 1   & 5 $\pm$ 1         & 4 $\pm$ 1         & 11 $\pm$ 4        & 5 $\pm$ 3      \\
hailfinder & 56 &      & 299 $\pm$ 437     & 55 $\pm$ 41       & 29730 $\pm$ 28269 & \yc OT & 445 $\pm$ 445     & 1250 $\pm$ 2758   & 30261 $\pm$ 28182 & \yc OT  & 3546 $\pm$ 5619   & 587 $\pm$ 306     & 37203 $\pm$ 35406 & \yc OT     \\
hepar2     & 70 &      & 29056 $\pm$ 17518 & 24427 $\pm$ 15628 & 8129 $\pm$ 6811   & \yc OT & 47869 $\pm$ 28169 & 32408 $\pm$ 21816 & 23515 $\pm$ 22694 & \yc OT  & 39769 $\pm$ 26715 & 38219 $\pm$ 27572 & 21942 $\pm$ 28073 & \yc OT     \\
win95pts   & 76 &      & 44666 $\pm$ 32398 & 21751 $\pm$ 19698 & 21679 $\pm$ 21324 & \yc OT & 34578 $\pm$ 19408 & 30477 $\pm$ 34155 & 23359 $\pm$ 20296 & \yc OT  & \yc OT        & \yc OT        & 27264 $\pm$ 30802 & \yc OT    
      
\end{tabularx}
\caption{The average number of equivalence classes $\pm$ standard deviation in the credible sets with various Bayes factors (BFs) and sample sizes (S.S.)}\label{tab:syn_num_ec}
\end{table}

\subsection{Bayes Factor vs. K-Best}\label{sec:bf_v_k}

In this section, we compare our approach with published solvers that are able to find a subset of top-scoring networks with the given parameter $k$. The solvers under consideration are KBest\_12b\footnote{\url{http://web.cs.iastate.edu/~jtian/Software/UAI-10/KBest.htm}} from~\citep{TianHR10}, KBestEC\footnote{\url{http://web.cs.iastate.edu/~jtian/Software/AAAI-14-yetian/KBestEC.htm}} from~\citep{ChenT2014}, and GOBNILP 1.6.3~\citep{BartlettC13},
referred to as KBest, KBestEC and GOBNILP below. The first two solvers are based on the dynamic programming approach introduced in~\citep{SilanderM06}. 
Due to the lack of support for BIC in KBest and KBestEC, only BDeu with a equivalent sample size of one is used in corresponding experiments.

The most recent stable version of \gobnilp{} is 1.6.3 that works with SCIP 3.2.1. The default configuration is used and experiments are conducted for both BIC and BDeu scoring functions. However, the $k$-best results are omitted here due to its poor performance. Despite that \gobnilp{} can iteratively find the $k$-best networks in descending order by adding linear constraints, the pruning rules designed to find the best network are turned off to preserve sub-optimal networks. In fact, the memory usage often exceeded 64 GB during the initial ILP formulation, indicating that the lack of pruning rules posed serious challenge for GOBNILP. \gobnilp{\_dev}, on the other hand, can take advantage of the pruning rules presented above in the proposed BF approach and its results compare favorably to KBest and KBestEC.

The experimental results of KBest, KBestEC and \gobnilp{\_dev} are reported in Table~\ref{TAB:kbest}, where $n$ is the number of random variables in the dataset, $N$ is the number of instances in the dataset, and $k$ is the number of top scoring networks. The search time $T$ is reported for KBest, KBestEC and GOBNILP\_dev (BF = 20). The number of DAGs covered by the $k$ MECs $|\graphset_k|$ is reported for KBestEC. In comparison, the last two columns are the number of found networks $|\graphset_{20}|$ and the number of MECs $|\mathcal{M}_{20}|$ using the BF approach with a given BF of 20 and BDeu scoring function.

As the number of requested networks $k$ increases, the search time for both KBest and KBestEC grows exponentially. The KBest and KBestEC are designed to solve problems of size fewer than 20, and so they have some difficulty with larger datasets.\footnote{Obtained through correspondence with the author.} They also fail to generate correct scoring files for msnbc. KBestEC seems to successfully expand the coverage of DAGs with some overhead for checking equivalence classes. However, KBestEC took much longer than KBest for some instances, e.g., nltcs and letter, and the number of DAGs covered by the found MECs is inconsistent for nltcs, letter and zoo. The search time for the BF approach is improved over the $k$-best approach except for datasets with very large sample sizes. The generalized pruning rules are very effective in reducing the search space, which then allows GOBNILP\_dev to solve the ILP problem subsequently. Comparing to the improved results in~\citep{ChenCD2015,ChenCD2016}, our approach can scale to larger networks if the scoring file can be generated.\footnote{We are unable to generate BDeu score files for datasets with 30 or more variables.}

\begin{figure}[thb]
    \centering
    \includegraphics[width=0.7\linewidth]{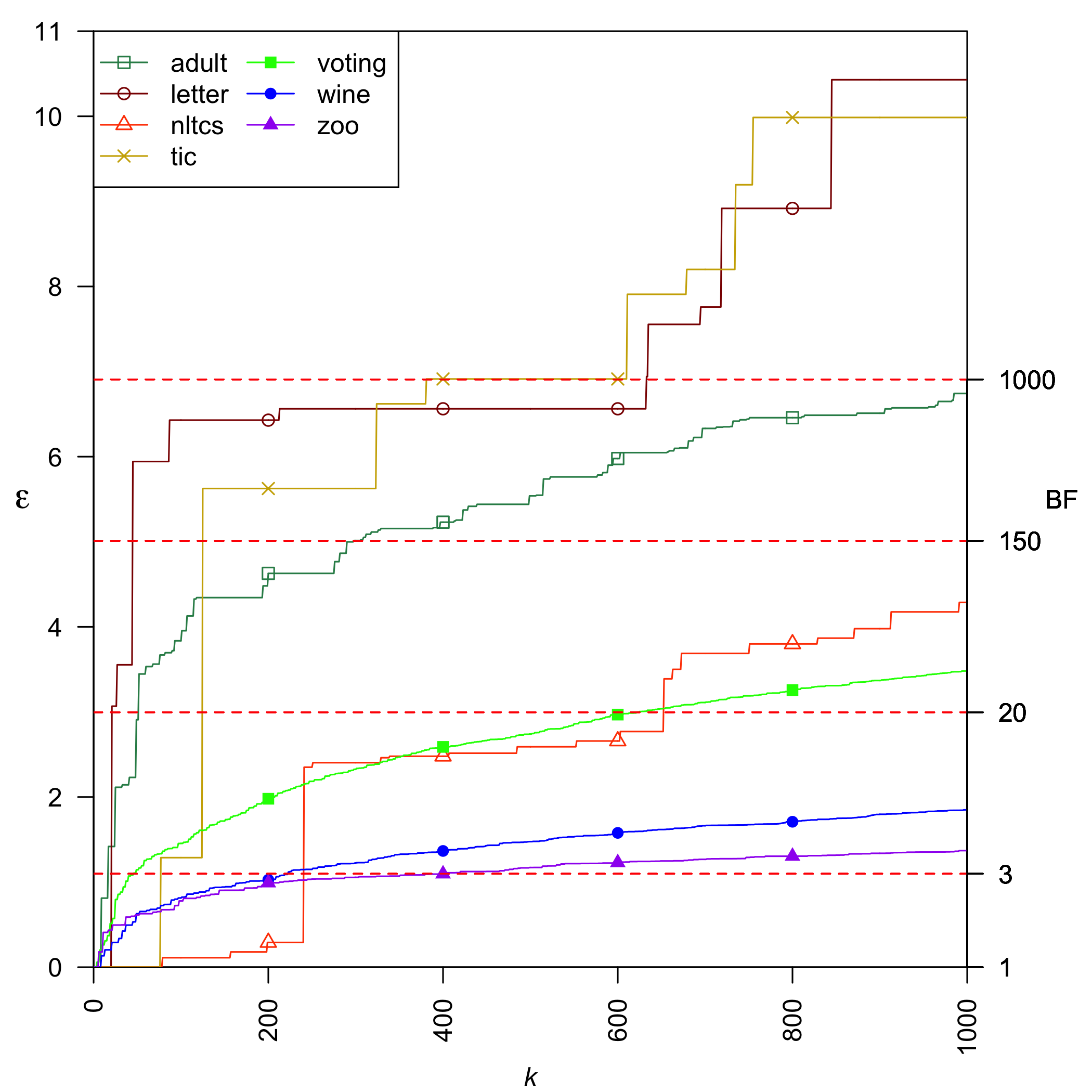}
    \caption{The deviation $\epsilon$ from the optimal BDeu score by $k$ using results from KBest. The corresponding values of the BF ($\epsilon=\log(BF)$, see Equation \ref{EQUATION:bf}) are presented on the right. For example, if the desired BF value is 20, then all networks falling below the dash line at 20 are credible.}\label{fig:diff}
\end{figure}

Now we show that different datasets have distinct score patterns in the top scoring networks. The scores of the 1,000-best networks for some datasets in the KBest experiment are plotted in Figure~\ref{fig:diff}. A specific line for a dataset indicates the deviation $\epsilon$ from the optimal BDeu score by the $k$th-best network. For reference, the red dash lines represent different levels of BFs calculated by $\epsilon=\log{BF}$ (See Equation~\ref{EQUATION:bf}). The figure shows that it is difficult to pick a value for $k$ \emph{a priori} to capture the appropriate set of top scoring networks. For a few datasets such as adult and letter, it only takes fewer than 50 networks to reach a BF of 20, whereas zoo needs more than 10,000 networks. The sample size has a significant effect on the number of networks at a given BF since the lack of data leads to many BNs with similar probabilities. It would be reasonable to choose a large value for $k$ in model averaging when data is scarce and vice versa, but only the BF approach is able to automatically find the appropriate and credible set of networks for further analysis.

\section{Conclusion}\label{sec:conclusion}

Existing approaches for model averaging for Bayesian network structure learning
either severely restrict the structure of the Bayesian network
or have only been shown to scale to networks with fewer than 30 random
variables. In this paper, we proposed a novel approach to model
averaging in Bayesian network structure learning that 
finds all networks within a factor of optimal. 
Our approach has two primary advantages. First,
our approach only considers \emph{credible} models in that they
are optimal or near-optimal in score. Second, our approach
is significantly more efficient and scales to much larger
Bayesian networks than existing approaches.
We modified GOBNILP~\citep{BartlettC13}, a state-of-the-art method for finding
an optimal Bayesian network, to implement our generalized pruning rules
and to find all \emph{near}-optimal networks. Our experimental results demonstrate
that the modified GOBNILP scales to significantly
larger networks without resorting to restricting the structure 
of the Bayesian networks that are learned.

\clearpage
\newpage
\appendix
\section{Proofs of Pruning Rules}

We discuss the original pruning rules and prove their generalization below. A candidate parent set $\Pi_i$ can be \textit{safely pruned} given a non-negative constant $\epsilon \in \mathbb{R}^+$ if $\Pi_i$ cannot be the parent set of $V_i$ in any network in the set of credible networks. Note that proofs of the original rules can be obtained by setting $\epsilon=0$.

\subsection{Proof of Lemma~\ref{lem:scoreprune}}

\citet{TeyssierK05} give a pruning rule that is applicable for all decomposable scoring functions.
\begin{theorem}\citep{TeyssierK05}
	Given a vertex variable $\vertex{j}$, and candidate parent sets
	$\Pi_j$ and  $\Pi_j^{\prime}$, if $\Pi_j \subset \Pi_j^{\prime}$ and $\score{}{\Pi_j} < \score{}{\Pi_j^{\prime}}$,
	$\Pi_j^{\prime}$ can be safely pruned. \label{thm:basicrule}
\end{theorem}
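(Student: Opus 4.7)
The plan is to prove Theorem~\ref{thm:basicrule} by a short exchange argument: assume, for contradiction, that some optimal DAG has $\Pi_j'$ as the parent set of $V_j$, and then build a strictly cheaper DAG by swapping $\Pi_j'$ for its subset $\Pi_j$. The proof rests on just two facts, both already in play: decomposability of $\sigma$, which localises the change in total score to a single summand, and the elementary observation that deleting arcs from a DAG never creates a cycle.

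Concretely, I would first fix an optimal DAG $G^{*} = (V, E^{*})$ and suppose, toward contradiction, that the parent set of $V_j$ in $G^{*}$ is exactly $\Pi_j'$. Next, I would construct $G^{**}$ from $G^{*}$ by deleting every arc into $V_j$ whose tail lies in $\Pi_j' \setminus \Pi_j$. Since $\Pi_j \subset \Pi_j'$ this is pure arc removal, so acyclicity is preserved and $G^{**}$ is a DAG over $V$ in which $V_j$ has parent set $\Pi_j$ while every other vertex keeps its original parent set. By decomposability of $\sigma$, all local contributions outside $V_j$ cancel, giving
\begin{equation*}
\sigma(G^{**}) - \sigma(G^{*}) \;=\; \sigma(V_j,\Pi_j) - \sigma(V_j,\Pi_j').
\end{equation*}
The hypothesis $\sigma(\Pi_j) < \sigma(\Pi_j')$ makes the right-hand side negative, so $\sigma(G^{**}) < \sigma(G^{*})$, contradicting the optimality of $G^{*}$. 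Hence $\Pi_j'$ is not the parent set of $V_j$ in any optimal DAG; since at $\epsilon = 0$ the credible networks are precisely the optimal ones, $\Pi_j'$ is safely prunable.

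I do not foresee any real obstacle: the only nontrivial step is recognising that arc deletion preserves acyclicity, and decomposability is a direct hypothesis. This same template is what I would reach for when attacking the relaxation in Lemma~\ref{lem:scoreprune}: for any credible network $G$ containing $\Pi_j'$, exchanging $\Pi_j'$ for $\Pi_j$ yields a DAG $G^{**}$ with $\sigma(G^{**}) = \sigma(G) + \sigma(\Pi_j) - \sigma(\Pi_j')$, and combining $\sigma(G) \le \opt + \epsilon$ with $\sigma(G^{**}) \ge \opt$ forces $\sigma(\Pi_j') - \sigma(\Pi_j) \le \epsilon$, so any violation of this gap bound lets the same contradiction go through.
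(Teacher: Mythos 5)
Your exchange argument is essentially the paper's own proof, which is given for the relaxed Lemma~\ref{lem:scoreprune} and recovers this theorem by setting $\epsilon = 0$: swap $\Pi_j'$ for its subset $\Pi_j$ in a putative optimal network and use decomposability to see that the total score strictly drops. The only addition is that you make explicit the (correct) observation that arc deletion preserves acyclicity, which the paper leaves implicit, and your closing remark on the $\epsilon$-relaxation matches the paper's treatment as well.
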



We relax this pruning rule and prove Lemma~\ref{lem:scoreprune} below.

	\begin{proof}(Lemma~\ref{lem:scoreprune})
Consider networks $\graph$ and $\graph'$ that are the same except for the parent set of $\vertex{j}$, where $\graph$ has the parent set $\Pi_j$ for $\vertex{j}$ and $\graph'$ has the parent set $\Pi_j^{\prime}$ for $\vertex{j}$. 
\begin{align*}
    \score{}{\graph} &=  \score{}{\Pi_j} + \sum_{i\neq j} \score{}{\Pi_j}  &[\score{ }{}\text{ is decomposable}]\\
                    &\leq\score{}{\Pi_j^{\prime}} + \epsilon+ \sum_{i\neq j} \score{}{\Pi_j}  &[\text{given}]\\
                    &=\score{}{\graph'}.
\end{align*}
Thus, $\graph'$ cannot be in the set of credible  networks.
	\end{proof}
	
\subsection{Proof of Theorem \ref{thm:decampospar}}

An additional pruning rule can be derived from Theorem \ref{thm:basicrule} that is applicable to the BIC/MDL scoring function. 

\begin{theorem}\citep{CamposJ11}
Given a vertex variable $\vertex{i}$, and candidate parent sets $\Pi_i$ and $\Pi_i^{\prime}$,
	if $\Pi_i \subset \Pi_i^{\prime}$ and $\score{}{\Pi_i} -  \pen{}{\Pi_i^{\prime}} < 0$,
	$\Pi_i^{\prime}$ and all supersets of $\Pi_i^{\prime}$ can be safely pruned  if $\sigma$ is the BIC/MDL scoring function.    
\end{theorem}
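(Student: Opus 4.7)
The plan is to reduce Theorem \ref{thm:decampospar} to the already-relaxed Lemma \ref{lem:scoreprune}, applied to the pair $\Pi_j \subset \Pi_j^{*}$ for every superset $\Pi_j^{*} \supseteq \Pi_j'$ (including $\Pi_j'$ itself). It therefore suffices to establish, under the stated hypothesis, the bound $\score{}{\Pi_j} + \epsilon \leq \score{}{\Pi_j^{*}}$ uniformly over all such $\Pi_j^{*}$.

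First, I would exploit two elementary structural properties of the local BIC score. The negative log-likelihood contribution $-\max_{\theta} L(\theta_j)$ is always non-negative, because every empirical ratio $n_{ijk}/n_{ij}$ lies in $[0,1]$ and hence contributes a non-positive logarithm; therefore $\score{}{\Pi_j^{*}} \geq t(\Pi_j^{*}) \cdot w$, with $w = \tfrac{\log N}{2}$. Next, the penalty count $t(\Pi_j^{*}) = r_{\Pi_j^{*}}(r_j - 1)$ is monotone nondecreasing in the parent set, since $\Pi_j' \subseteq \Pi_j^{*}$ implies $r_{\Pi_j^{*}} = \prod_{V \in \Pi_j^{*}} r_V \geq \prod_{V \in \Pi_j'} r_V = r_{\Pi_j'}$. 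Combining the two observations yields $\score{}{\Pi_j^{*}} \geq t(\Pi_j') \cdot w = \pen{}{\Pi_j'}$.

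Second, I would chain this inequality with the hypothesis $\score{}{\Pi_j} - \pen{}{\Pi_j'} + \epsilon < 0$ to conclude $\score{}{\Pi_j} + \epsilon < \pen{}{\Pi_j'} \leq \score{}{\Pi_j^{*}}$. Since $\Pi_j \subset \Pi_j' \subseteq \Pi_j^{*}$ and BIC is decomposable, Lemma \ref{lem:scoreprune} applied to the pair $\Pi_j \subset \Pi_j^{*}$ certifies that $\Pi_j^{*}$ can be safely pruned. Because $\Pi_j^{*}$ ranges over $\Pi_j'$ and all its supersets, the theorem follows.

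The main obstacle I anticipate is conceptual rather than computational: spotting that the hypothesis compares a full BIC score $\score{}{\Pi_j}$ against only the penalty component $\pen{}{\Pi_j'}$ of the larger set, and that this apparently asymmetric comparison is legitimate precisely because the log-likelihood piece of any superset is bounded below by zero. Once that observation is in hand, the rest is a short chain of inequalities together with the relaxed Lemma \ref{lem:scoreprune}.
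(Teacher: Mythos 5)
Your proof is correct and follows essentially the same route as the paper's: both arguments hinge on the fact that the maximized log-likelihood term is non-positive, so any superset's BIC score is bounded below by its (monotonically growing) penalty, and then conclude via the relaxed subset-pruning lemma. Setting $\epsilon = 0$ recovers the original statement as cited, exactly as the paper notes.
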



Here, $\pen{}{\Pi_i^{\prime}}$ is the penalty term in the BIC scoring function. This pruning rule is relaxed as Theorem~\ref{thm:decampospar} and we prove it below. 

\begin{proof}(Theorem~\ref{thm:decampospar})
    \begin{align*}
       &\score{}{\Pi_i} -  \pen{}{\Pi_i^{\prime}}  + \epsilon < 0  &[\text{given}]\\
       &\Rightarrow -\score{}{\Pi_i} + \pen{}{\Pi_i^{\prime}} -\epsilon > 0\\
       &\Rightarrow -\score{}{\Pi_i} + \pen{}{\Pi_i^{\prime}} - L(\Pi_i^{\prime})  - \epsilon > 0 &[L(\Pi_i^{\prime}) < 0]\\
       &\Rightarrow \score{}{\Pi_i^{\prime}} > \score{}{\Pi_i} + \epsilon.
    \end{align*}
    By Lemma \ref{lem:scoreprune}, $\Pi_i^{\prime}$ cannot be an optimal parent set. Using the fact that penalties increase with increase in parent set size, supersets of $\Pi_i^{\prime}$ cannot be in the set of credible  networks. The result follows.
\end{proof}

\subsection{Proof of Theorem~\ref{THEOREM:decamposbicrelaxed}}


\begin{theorem}\citep{CamposJ11}
	 Given a vertex variable $V_i$ and candidate parent set $\Pi_i$ such that $r_{\Pi_i}> \frac{N}{w} \frac{\log r_i}{r_i -1}$,  if $\parents_i \subsetneq \parents_i'$ , then $\parents_i'$ can be safely pruned if $\sigma$ is the BIC scoring function. \label{THEOREM:decamposbic}
\end{theorem}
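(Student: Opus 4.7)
The plan is to reduce this theorem to the relaxed penalty rule Theorem \ref{thm:decampospar}, which prunes $\Pi_i'$ and all its supersets once $\score{}{\Pi_i} - \pen{}{\Pi_i'} + \epsilon < 0$ holds for some $\Pi_i \subsetneq \Pi_i'$. My task is therefore to verify this inequality for every proper superset $\Pi_i' \supsetneq \Pi_i$, starting from the given lower bound on $r_{\Pi_i}$.

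First I would upper bound $\score{}{\Pi_i}$ via a standard entropy argument: the maximum-likelihood term $-L(\theta_i)$ equals $N \cdot H(V_i \mid \Pi_i)$, and a conditional entropy of a variable with $r_i$ states cannot exceed $\log r_i$, so $\score{}{\Pi_i} \leq N \log r_i + r_{\Pi_i}(r_i - 1) w$. Next I would lower bound $\pen{}{\Pi_i'} = r_{\Pi_i'}(r_i - 1) w$: since every random variable has cardinality at least two, adjoining any extra parent at least doubles the joint state space, so $r_{\Pi_i'} \geq 2 r_{\Pi_i}$ and hence $\pen{}{\Pi_i'} \geq 2 r_{\Pi_i}(r_i - 1) w$.

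Subtracting the two bounds gives $\score{}{\Pi_i} - \pen{}{\Pi_i'} \leq N \log r_i - r_{\Pi_i}(r_i - 1) w$. Multiplying the hypothesis $r_{\Pi_i} > \frac{N}{w}\frac{\log r_i}{r_i - 1} + \epsilon$ through by the positive quantity $(r_i - 1) w$ yields $r_{\Pi_i}(r_i - 1) w > N \log r_i + \epsilon (r_i - 1) w$, and under the mild regime $(r_i - 1) w \geq 1$ (satisfied whenever $r_i \geq 2$ and $N \geq e^2$) this strengthens to $r_{\Pi_i}(r_i - 1) w > N \log r_i + \epsilon$. Combining produces $\score{}{\Pi_i} - \pen{}{\Pi_i'} + \epsilon < 0$, so Theorem \ref{thm:decampospar} applies and the pruning is safe for that specific $\Pi_i'$; since the argument holds for every proper superset, the conclusion follows.

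The main obstacle is just this bookkeeping around $\epsilon$: the natural penalty subtraction produces $\epsilon$ scaled by $(r_i - 1) w$, whereas the hypothesis in the theorem inserts $\epsilon$ without that factor, so one has to be explicit about the standard regularity under which the two forms of $\epsilon$ agree. The rest is a routine combination of the entropy ceiling on the log-likelihood with the multiplicative growth of $r_{\Pi_i'}$ under superset extensions.
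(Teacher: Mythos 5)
Your proof is correct and follows essentially the same route as the paper's: the same entropy ceiling $N\log r_i$ on the fitted log-likelihood together with the same at-least-doubling of the penalty term for any proper superset, merely packaged as an application of Theorem~\ref{thm:decampospar} rather than invoking Lemma~\ref{lem:scoreprune} directly. Your explicit attention to the $(r_i-1)w\ge 1$ regime when carrying the $\epsilon$ term through is in fact more careful than the paper's own final step, and for the stated original rule (the $\epsilon=0$ case) that caveat is vacuous.
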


We relax the pruning rule given in Theorem~\ref{THEOREM:decamposbic} as Theorem~\ref{THEOREM:decamposbicrelaxed} and prove it below.

\begin{proof}(Theorem~\ref{THEOREM:decamposbicrelaxed})
\begin{align*}
    &\sigma(\Pi_i') - \sigma(\Pi_i)\\
    & \stackbin[]{0}{=} -\displaystyle \max_{\theta_i} L(\Pi_i') + t(\Pi_i')\cdot w + \displaystyle \max_{\theta_i} L(\Pi_i) - t(\Pi_i)\cdot w \\
    & \stackbin[]{1}{\geq} \displaystyle -\max_{\theta_i} L(\Pi_i) + t(\Pi_i')\cdot w  - t(\Pi_i)\cdot w \\
    &\stackbin[]{2}{=} \displaystyle -\sum_{j=1}^{r_{\Pi_i}}n_{ij}(-\sum^{r_i}_{i=1}\frac{n_{ijk}}{n_{ij}}\log \frac{n_{ijk}}{n_{ij}} )+ t(\Pi_i')\cdot w - t(\Pi_i)\cdot w \\
    & \stackbin[]{3}{\geq} \displaystyle -\sum_{j=1}^{r_{\Pi_i}}n_{ij}H(\theta_{ij})- t(\Pi_i')\cdot w  + t(\Pi_i)\cdot w \\
    & \stackbin[]{4}{\geq}  \displaystyle -\sum_{j=1}^{r_{\Pi_i}}n_{ij}\log r_i + r_{\Pi_i}\cdot(r_e -1)\cdot (r_i -1)\cdot w \\
    & \stackbin[]{5}{\geq}  \displaystyle -\sum_{j=1}^{r_{\Pi_i}}n_{ij}\log r_i + r_{\Pi_i}\cdot (r_i -1)\cdot w \\ 
    & \stackbin[]{6}{=}  -N\log r_i + r_{\Pi_i}\cdot (r_i -1)\cdot w\\
    & \stackbin[]{7}{>} \epsilon.
\end{align*}
    Step 0  uses the definition of $BIC$. Step 1 uses $ \max_{\theta_i} L(\Pi_i')$ is negative. Step 2 uses the fact that the maximum likelihood estimate, $\theta^\ast_{ijk} = \frac{n_{ijk}}{n_{ij}}$ and $n_{ij} = \sum_{i=1}^{r_{i}}n_{ijk}$. Step 3 uses the definition of entropy. Step 4 uses the definition of the penalty function $t$. Step 5 uses $r_e \geq 2$.  Finally, the RHS in Step 6 follows because of the definition of $n_{ij}$. Step 7 uses the assumption of the theorem. 
    
    Using Lemma \ref{lem:scoreprune}, we get the result as desired.  
	\end{proof}
	
\subsection{Proof of Corollary~\ref{cor:decampossizerelaxed}}
\begin{corollary}\citep{CamposJ11}
	Given a vertex variable $\vertex{i}$ and candidate parent set 
	$\Pi_i$, if $\Pi_i$ has more than $\log_2 N$ elements,
	$\Pi_i$ can be safely pruned if $\sigma$ is the BIC scoring function. \label{cor:decampossize}
\end{corollary}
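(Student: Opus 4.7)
The plan is to deduce Corollary~\ref{cor:decampossize} directly from Theorem~\ref{THEOREM:decamposbic}. Since that theorem prunes only \emph{strict supersets} of any candidate parent set $\Pi$ for which $r_\Pi$ exceeds the threshold $\frac{N}{w}\frac{\log r_i}{r_i-1}$, I cannot apply it directly to the $\Pi_i$ whose pruning is sought; instead I must exhibit a witness $\Pi \subsetneq \Pi_i$ satisfying the hypothesis, so that $\Pi_i$, being a strict superset of $\Pi$, is then pruned.

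First, I will set $\Pi = \Pi_i \setminus \{V\}$ for an arbitrary $V \in \Pi_i$, so that $\Pi \subsetneq \Pi_i$ and $|\Pi| = |\Pi_i| - 1$. Because each random variable has at least two states, $r_\Pi \geq 2^{|\Pi|}$, and the hypothesis $|\Pi_i| > \log_2 N$ then yields $r_\Pi \geq 2^{|\Pi_i| - 1} > N/2$.

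Second, I will bound the threshold $\frac{N}{w}\frac{\log r_i}{r_i - 1}$ from above. The function $r_i \mapsto \frac{\log r_i}{r_i - 1}$ is decreasing on $[2, \infty)$ and is at most $\log 2$ for any $r_i \geq 2$. Combined with $w = \frac{\log N}{2}$, the threshold is at most $\frac{2N \log 2}{\log N}$. Verifying $r_\Pi >$ threshold thus reduces to checking $N/2 > \frac{2N \log 2}{\log N}$, i.e., $\log N > 4 \log 2$, a mild lower bound on $N$ satisfied in any realistic dataset. With this inequality in hand, Theorem~\ref{THEOREM:decamposbic} applied to $\Pi$ yields that every strict superset of $\Pi$---including $\Pi_i$ itself---can be safely pruned.

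The main obstacle is the bookkeeping between strict subsets and strict supersets: the naive attempt to apply Theorem~\ref{THEOREM:decamposbic} directly to $\Pi_i$ would only prune proper supersets of $\Pi_i$ rather than $\Pi_i$. Dropping one variable to form $\Pi$ costs a factor of two in the $r_\Pi$ bound, but the logarithmic penalty weight $w$ easily absorbs it, so the argument goes through cleanly.
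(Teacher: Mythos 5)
Your proof is correct and takes essentially the same route as the paper's: both deduce the corollary from Theorem~\ref{THEOREM:decamposbic} by exhibiting a strict subset $\Pi \subsetneq \Pi_i$ whose instantiation count satisfies $r_{\Pi} \geq 2^{|\Pi|} > \frac{N}{w}\frac{\log r_i}{r_i-1}$, using $w=\frac{\log N}{2}$ and $\frac{\log r_i}{r_i-1}\leq\log 2$. The only differences are cosmetic: the paper picks a witness with exactly $\ceil{\log_2 N}$ elements and so needs only $N>4$, whereas your witness $\Pi_i$ minus one element costs a factor of two and pushes the implicit requirement to $N>16$; on the other hand your bookkeeping of the strict-superset condition is, if anything, slightly more careful than the paper's.
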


Using Theorem \ref{THEOREM:decamposbicrelaxed}, we generalize Corollary~\ref{cor:decampossize} to Corollary~\ref{cor:decampossizerelaxed} and prove it below.

	\begin{proof}(Corollary~\ref{cor:decampossizerelaxed})
	Assuming $N>4$, take a variable $V_i$ and a parent set $\Pi_i$
with $|\Pi_i| = \ceil{\log_2 (N + \epsilon)}$ elements.  Because every variable has at least two states, we know that $r_{\Pi_i} \geq 2^{|\Pi_i|}  \geq N +\epsilon > \frac{N}{w}\frac{\log r_i}{r_i -1}+ \epsilon$, because $w =\log \frac{N}{2}$ gives us $\frac{\log r_i}{w(r_i -1)} < 1$ , and by Theorem \ref{THEOREM:decamposbicrelaxed} we know that no proper superset of $\Pi_i$ can be an optimal parent set for $\vertex{i}$ as desired.
	\end{proof} 

\subsection{Proof of Theorem~\ref{thm:entropyrelaxed}}

\begin{lemma}\citep{Campos2017}
Given a vertex variable $V_i$, and candidate parent sets $\Pi_i$, $\Pi_i'$ such that $\Pi_i = \Pi_i' \cup \{V_j\}$ for some variable $V_j \notin \Pi_i'$, we have  $L(\Pi_i) - L(\Pi_i') \leq  N \cdot  \min\{H(V_i \mid \Pi_i'), H(V_j  \mid \Pi_i')\}$. \label{lem:entropy1}
\end{lemma}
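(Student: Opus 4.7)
The plan is to exploit the well-known identity between the maximum log-likelihood and the empirical conditional entropy, then rewrite the difference as a (conditional) mutual information and use its symmetry.

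First, I would establish $\max_{\theta_i} L(\Pi_i) = -N \cdot H(V_i \mid \Pi_i)$. Plugging the MLE $\theta_{ijk}^{*} = n_{ijk}/n_{ij}$ into the log-likelihood expression from Section~\ref{sec:background} yields $\sum_{j,k} n_{ijk} \log(n_{ijk}/n_{ij})$, which matches $-N$ times the empirical conditional entropy $H(V_i \mid \Pi_i)$ using the definitions given just before Theorem~\ref{thm:entropyrelaxed} (dividing and multiplying by $N$ to convert counts into empirical probabilities).

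Second, I would apply this identity to both $\Pi_i$ and $\Pi_i'$. Since $\Pi_i = \Pi_i' \cup \{V_j\}$, the difference becomes
\begin{equation*}
L(\Pi_i) - L(\Pi_i') = N\bigl[H(V_i \mid \Pi_i') - H(V_i \mid \Pi_i', V_j)\bigr] = N \cdot I(V_i ; V_j \mid \Pi_i'),
\end{equation*}
i.e., $N$ times the empirical conditional mutual information between $V_i$ and $V_j$ given $\Pi_i'$.

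Third, I would invoke the standard symmetry $I(V_i ; V_j \mid \Pi_i') = H(V_j \mid \Pi_i') - H(V_j \mid V_i, \Pi_i')$ together with the non-negativity of conditional entropy. This immediately gives $I(V_i ; V_j \mid \Pi_i') \leq H(V_i \mid \Pi_i')$ from the first expression and $I(V_i ; V_j \mid \Pi_i') \leq H(V_j \mid \Pi_i')$ from the symmetric one; taking the tighter bound yields the $\min$ on the right-hand side.

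This proof is essentially a chain of definitional identities, so there is no serious obstacle. The only subtle point worth being careful about is the empirical version of the mutual-information symmetry: one needs the counts $n_{ijk}$ (indexed by $V_i$'s states and by joint states of $\Pi_i'$ and $V_j$) to correspond to a genuine joint empirical distribution over $(V_i, V_j, \Pi_i')$ so that standard information-theoretic identities transfer directly from distributions to samples. Once that observation is made, the bound follows without further calculation.
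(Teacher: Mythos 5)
Your proposal is correct and follows essentially the same route as the paper's proof: both identify $L(\Pi_i)=-N\cdot H(V_i\mid\Pi_i)$ via the MLE, rewrite the difference as $N\cdot I(V_i;V_j\mid\Pi_i')$, and obtain the two bounds from non-negativity of conditional entropy together with the symmetry of conditional mutual information. Your closing remark about the counts forming a genuine joint empirical distribution is a fair point that the paper leaves implicit.
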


\begin{proof}
First, consider the definition of $L_{i}(\Pi_i)$,
\begin{equation*}
    L(\Pi_i) = \max_{\theta} \sum_{j=1}^{r_{\Pi_i}}\sum_{k=1}^{r_i} n_{ijk} \log {\theta}_{ijk} ,
\end{equation*}
where the maximum likelihood estimate of $\theta_{ijk}$ is $\frac{n_{ijk}}{n_{ij}}$.
This gives us $N\cdot H(V_i \mid \Pi_i) = -L(\Pi_i)$. Thus, we get,
\begin{align*}
    L(\Pi_i) - L(\Pi_i') &= N\cdot (H(V_i \mid  \Pi_i') - H(V_i \mid  \Pi_i)\stackbin[]{1}{\leq} N\cdot H(V_i \mid  \Pi_i') .
\end{align*}
We use the fact that entropy is positive. Now, consider the definition of mutual information,
\begin{align*}
    I(X,Y \mid Z) = H(X \mid Z) = H(X \mid Y \cup Z).
\end{align*}
This gives us,
\begin{align*}
    L(\Pi_i) - L(\Pi_i') &= N \cdot I(V_i, V_j  \mid  \Pi_i')\\
                        & \stackbin[]{2}{=} N\cdot (H(V_j \mid  \Pi_i') - H(V_j \mid \Pi_i' \cup \{V_i\}))\\
    \Rightarrow L(\Pi_i) - L(\Pi_i') &\stackbin[]{3}{\leq} N \cdot  \min\{H(V_i \mid \Pi_i'), H(V_j  \mid \Pi_i')\}.
\end{align*}
Step 3 combines Steps 1 and 2. The result follows as desired.
\end{proof}

\begin{theorem}\citep{Campos2017}
Given a vertex variable $V_i$, and candidate parent set $\Pi_i$, let $V_j \notin \Pi_i$  such that $N \cdot \min \{H(V_i \mid  \Pi_i), H(V_j  \mid \Pi_i)\} \geq (1 - r_{j}) \cdot t(\Pi_i)$. Then the candidate parent set $\Pi_i' = \Pi_i \cup \{V_j \}$ and all its supersets can be safely pruned if $\sigma$ is the BIC scoring function. 
\label{thm:entropy}
\end{theorem}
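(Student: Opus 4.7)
The plan is to reduce the statement to Lemma~\ref{lem:scoreprune} by comparing any candidate $\Pi_i'' \supseteq \Pi_i' = \Pi_i \cup \{V_j\}$ against $\Pi_i'' \setminus \{V_j\}$, which is always a strict subset of $\Pi_i''$ because $V_j \in \Pi_i''$. If I can show $\sigma(\Pi_i'') \geq \sigma(\Pi_i'' \setminus \{V_j\})$ under the hypothesis, then Lemma~\ref{lem:scoreprune} (with $\epsilon = 0$) certifies that $\Pi_i''$ is safely pruned. Taking $\Pi_i'' = \Pi_i'$ handles $\Pi_i'$ itself, and letting $\Pi_i''$ range over strict supersets of $\Pi_i'$ handles the rest in a single argument.

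For fixed $\Pi_i'' \supseteq \Pi_i'$, the BIC decomposition gives
\[
\sigma(\Pi_i'') - \sigma(\Pi_i'' \setminus \{V_j\}) = -\bigl[L(\Pi_i'') - L(\Pi_i'' \setminus \{V_j\})\bigr] + w\bigl[t(\Pi_i'') - t(\Pi_i'' \setminus \{V_j\})\bigr].
\]
Lemma~\ref{lem:entropy1}, applied with smaller set $\Pi_i'' \setminus \{V_j\}$ and added variable $V_j$, bounds the likelihood gap by $N \cdot \min\{H(V_i \mid \Pi_i'' \setminus \{V_j\}),\, H(V_j \mid \Pi_i'' \setminus \{V_j\})\}$. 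Because $\Pi_i \subseteq \Pi_i'' \setminus \{V_j\}$ and conditioning on a larger set cannot increase entropy, this upper bound is itself at most $N \cdot \min\{H(V_i \mid \Pi_i),\, H(V_j \mid \Pi_i)\}$, which is the quantity appearing in the hypothesis.

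For the penalty, the identity $t(\Pi) = r_{\Pi}(r_i - 1)$ together with $r_{\Pi_i''} = r_j \cdot r_{\Pi_i'' \setminus \{V_j\}}$ gives $t(\Pi_i'') - t(\Pi_i'' \setminus \{V_j\}) = (r_j - 1)\, r_{\Pi_i'' \setminus \{V_j\}} (r_i - 1) \geq (r_j - 1)\, t(\Pi_i)$, using $r_{\Pi_i'' \setminus \{V_j\}} \geq r_{\Pi_i}$. Combining the two estimates,
\[
\sigma(\Pi_i'') - \sigma(\Pi_i'' \setminus \{V_j\}) \geq -N \cdot \min\{H(V_i \mid \Pi_i),\, H(V_j \mid \Pi_i)\} + (r_j - 1)\, t(\Pi_i)\, w,
\]
and the theorem's hypothesis makes the right-hand side nonnegative. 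Lemma~\ref{lem:scoreprune} then rules out $\Pi_i''$.

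The main obstacle I anticipate is the ``and all its supersets'' part: Lemma~\ref{lem:entropy1} governs only the addition of a single variable, so the naive attempt of comparing $\sigma(\Pi_i'')$ directly against $\sigma(\Pi_i)$ does not go through when $\Pi_i''$ differs from $\Pi_i$ by several variables. Comparing instead against $\Pi_i'' \setminus \{V_j\}$ keeps the gap at exactly one variable, at the price of having to transport the entropy bound back from $\Pi_i'' \setminus \{V_j\}$ to $\Pi_i$ (done by monotonicity of conditional entropy) and the penalty bound back from $r_{\Pi_i'' \setminus \{V_j\}}$ to $r_{\Pi_i}$ (done by monotonicity of $r_\Pi$ in $\Pi$). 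Both directions go the ``right way,'' so the two relaxations combine cleanly with the hypothesis.
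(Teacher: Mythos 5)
Your core route is the same as the paper's: bound the likelihood gap by Lemma~\ref{lem:entropy1}, use the identity $t(\Pi\cup\{V_j\}) = r_j\, t(\Pi)$ for the penalty gap, and hand the resulting score comparison to Lemma~\ref{lem:scoreprune}. Where you genuinely improve on the paper is the ``and all its supersets'' clause: the paper's proof only compares $\sigma(\Pi_i')$ with $\sigma(\Pi_i)$ and then asserts the superset claim without further argument, whereas your comparison of each $\Pi_i''\supseteq\Pi_i'$ against $\Pi_i''\setminus\{V_j\}$, transported back to $\Pi_i$ by monotonicity of sample conditional entropy and of $r_{\Pi}$, closes that gap cleanly in a single pass.

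However, the final step --- ``the theorem's hypothesis makes the right-hand side nonnegative'' --- does not follow from the hypothesis as written. You correctly derive
$\sigma(\Pi_i'') - \sigma(\Pi_i''\setminus\{V_j\}) \geq -N\cdot\min\{H(V_i\mid\Pi_i),\,H(V_j\mid\Pi_i)\} + (r_j-1)\,t(\Pi_i)\,w$,
and nonnegativity of the right-hand side requires the \emph{upper} bound $N\cdot\min\{\cdot\}\leq (r_j-1)\,t(\Pi_i)\,w$; the hypothesis instead supplies the lower bound $N\cdot\min\{\cdot\}\geq (1-r_j)\,t(\Pi_i)$, which is vacuously true (the left side is nonnegative, the right side nonpositive) under this paper's convention $t(\Pi_i)=r_{\Pi_i}(r_i-1)>0$ and hence cannot license any pruning. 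This is not really your error: the statement inherits the inequality direction from a convention in which the penalty term is negative, and the paper's own proof commits exactly the same non sequitur at its Step~2 (and additionally drops the factor $w$ from the penalty terms). With the intended hypothesis $N\cdot\min\{\cdot\}\leq(r_j-1)\,t(\Pi_i)\,w$ your argument is complete and in fact establishes more than the paper's proof does, since it actually handles the supersets.
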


We relax Theorem~\ref{thm:entropy} and prove its generalization below.

\begin{proof}(Theorem~\ref{thm:entropyrelaxed})
\begin{align*}
    \score{}{\Pi_i'} 
                    &\stackbin[]{0}{=} -L(\Pi_i')  +  t(\Pi_i')\\
                    &\stackbin[]{1}{\geq} -L(\Pi_i) - N \cdot \min\{H(V_i \mid \Pi_i); H(V_j \mid \Pi_i)\} + t(\Pi_i')\\
                    &\stackbin[]{2}{\geq} -L(\Pi_i) + (1 - r_j)\cdot t(\Pi_i) +\epsilon  +t(\Pi_i') \\
                    &\stackbin[]{3}{=} -L(\Pi_i) + t(\Pi_i) - r_j \cdot t(\Pi_i) + \epsilon + t(\Pi_i') \\
                    &\stackbin[]{4}{=} -L(\Pi_i) + t(\Pi_i) - r_j \cdot r_{\Pi_i} \cdot (r_i -1) + \epsilon + t(\Pi_i') \\
                    &\stackbin[]{5}{=} -L(\Pi_i) + t(\Pi_i) - t(\Pi_i') + \epsilon + t(\Pi_i') \\
                    &\stackbin[]{6}{=} \score{}{\Pi_i} + \epsilon.
\end{align*}
Step 1 uses Lemma \ref{lem:entropy1}. Step 2 uses the assumptions of the question. Step 4 uses the definition of $t$. Step 5 uses $\Pi_i' = \Pi_i \cup \{V_j \}$. Using Lemma \ref{lem:scoreprune}, the result follows as desired.
\end{proof}

\subsection{Proof of Theorem~\ref{thm:jamesrelaxed}}

\begin{lemma}
Let $n_{ij}$ be a positive integer and $\alpha'$ be a positive real number. Then
\begin{equation*}
    \log \frac{\Gamma(n_{ij} + \alpha')}{\Gamma(\alpha')} = \sum_{i=0}^{n_{ij -1}}\log (i + \alpha')
\end{equation*} \label{lem:gamma}
\end{lemma}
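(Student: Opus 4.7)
The plan is to proceed by induction on the positive integer $n_{ij}$, using only the defining functional equation of the Gamma function, namely $\Gamma(z+1) = z\,\Gamma(z)$ for all real $z > 0$. Since $\alpha' > 0$, every value $i + \alpha'$ that appears is strictly positive, so all invocations of the functional equation are legitimate and $\Gamma(\alpha')$ is finite and nonzero; this makes the ratio and its logarithm well defined throughout.

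For the base case $n_{ij} = 1$, the functional equation gives $\Gamma(1 + \alpha') = \alpha'\,\Gamma(\alpha')$, so $\log \tfrac{\Gamma(1 + \alpha')}{\Gamma(\alpha')} = \log \alpha'$, which matches the single-term sum $\sum_{i=0}^{0} \log(i + \alpha') = \log \alpha'$.

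For the inductive step, suppose the identity holds at some $n_{ij} = m$. For $n_{ij} = m+1$, a single application of the functional equation yields $\Gamma(m + 1 + \alpha') = (m + \alpha')\,\Gamma(m + \alpha')$, so
\begin{align*}
\log \frac{\Gamma(m + 1 + \alpha')}{\Gamma(\alpha')}
  &= \log(m + \alpha') + \log \frac{\Gamma(m + \alpha')}{\Gamma(\alpha')} \\
  &= \log(m + \alpha') + \sum_{i=0}^{m-1} \log(i + \alpha') \\
  &= \sum_{i=0}^{m} \log(i + \alpha'),
\end{align*}
where the middle equality uses the inductive hypothesis.

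There is essentially no obstacle here: the lemma is a telescoping of the Gamma recurrence, equivalent to the well-known identity $\Gamma(n_{ij} + \alpha') = \Gamma(\alpha') \prod_{i=0}^{n_{ij}-1}(i + \alpha')$. The only item requiring care is keeping the summation indices aligned with this factorial-style product when unrolling the recurrence.
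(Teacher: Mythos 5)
Your induction is correct and is essentially the same argument as the paper's, which unrolls the recurrence $\Gamma(x+1)=x\Gamma(x)$ explicitly and cancels the telescoping factors rather than phrasing it as a formal induction. Nothing is missing; your version is if anything slightly more rigorous about the index bookkeeping.
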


\begin{proof}
We start with the property that $\Gamma(x+1) = x\Gamma(x)$ for any positive real number $x$. As $\alpha' > 0$, this gives us,
\begin{align*}
    \frac{\Gamma(1 + \alpha')}{\Gamma(\alpha') } &\stackbin[]{0}{=} \alpha' \\
     \frac{\Gamma(2 + \alpha')}{ \Gamma(1+ \alpha')} &\stackbin[]{1}{=} (1+ \alpha') \\
    \Rightarrow    \frac{\Gamma(1 + \alpha')\cdot\Gamma(2 + \alpha')}{ \Gamma(1+ \alpha')\Gamma(\alpha') }  &\stackbin[]{2}{=} \alpha' (1+ \alpha')\\
    \Rightarrow  \frac{\Gamma(1 + \alpha')\cdots\Gamma(n_{ij
    }+ \alpha')}{ \Gamma(n_{ij} -1 + \alpha')\cdots\Gamma(\alpha')} &\stackbin[]{3}{=} \alpha'\cdots (n_{ij} -1 + \alpha') \\
    \Rightarrow  \frac{\Gamma(n_{ij} + \alpha') }{ \Gamma(\alpha')} &\stackbin[]{4}{=} \alpha'\cdots (n_{ij} -1 + \alpha')\\
    \Rightarrow \log \frac{\Gamma(n_{ij}+ \alpha')}{\Gamma(\alpha')} &\stackbin[]{5}{=} \sum_{i=0}^{n_{ij -1}}\log (i + \alpha').
\end{align*}
Step 1 uses $1 + \alpha'$. Step 2 follows by multiplication of the equations in Step 1 and Step 0. Step 3 follows by repeated application of the identity. Step 4 cancels identical terms in the LHS. The result follows as desired.
\end{proof}

\begin{lemma}{6B}
Let $\{n_{ijk}\}_{k=1,...r_{i}}$ be non-negative integers with a positive sum, $n_{ij} = \sum_{k=1}^{r_i} n_{ijk}$ and $\alpha''$ be a positive real number. Then
\begin{equation*}
    \sum_{k=1}^{r_i} \log \frac{\Gamma(n_{ijk} + \alpha'')}{\Gamma(\alpha'')} \leq \log \frac{\Gamma(n_{ij} + \alpha'')}{\Gamma(\alpha'')}
\end{equation*} \label{lem:gamma2}
\end{lemma}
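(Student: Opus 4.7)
The plan is to reduce the claim to a super-additivity property of the single-variable function $f(m) := \log\frac{\Gamma(m + \alpha'')}{\Gamma(\alpha'')}$, with the convention $f(0) = \log 1 = 0$, and then prove that super-additivity directly using the explicit sum representation provided by Lemma 6A.

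First, I would rewrite the desired inequality as $\sum_{k=1}^{r_i} f(n_{ijk}) \le f(n_{ij})$. A straightforward induction on $r_i$ reduces this to the two-variable version $f(a) + f(b) \le f(a+b)$ for non-negative integers $a, b$: the base case $r_i = 1$ is trivial, and the inductive step merges the last two summands via $f(n_{ij,r_i-1}) + f(n_{ij,r_i}) \le f(n_{ij,r_i-1} + n_{ij,r_i})$ and then applies the induction hypothesis to the resulting $r_i - 1$ summands.

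Next, I would establish $f(a) + f(b) \le f(a+b)$ using Lemma 6A, which gives the closed form $f(m) = \sum_{i=0}^{m-1}\log(i + \alpha'')$. Then $f(a+b) - f(b) = \sum_{i=b}^{a+b-1}\log(i + \alpha'')$, and reindexing $j = i - b$ rewrites this as $\sum_{j=0}^{a-1}\log(b + j + \alpha'')$. Comparing term-by-term against $f(a) = \sum_{j=0}^{a-1}\log(j + \alpha'')$, and using monotonicity of $\log$ on $(0, \infty)$ together with $b \ge 0$ and $\alpha'' > 0$ (which ensures every argument is strictly positive), yields $f(a) \le f(a+b) - f(b)$ as desired.

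I do not anticipate a serious obstacle: Lemma 6A does the heavy lifting by converting the $\Gamma$-ratio into a finite sum of logarithms, after which the inequality collapses to a shift-monotonicity comparison. The only care needed is handling the boundary case where some $n_{ijk} = 0$ (absorbed by the convention $f(0) = 0$, consistent with Lemma 6A's empty sum) and verifying that the hypothesis $\alpha'' > 0$ keeps every $\log$-argument positive so the monotonicity step is justified.
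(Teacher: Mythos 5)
Your proof is correct, but it takes a different route from the paper's. You reduce the claim to two-term super-additivity of $f(m)=\log\bigl(\Gamma(m+\alpha'')/\Gamma(\alpha'')\bigr)$ by induction on the number of summands, and then prove $f(a)+f(b)\le f(a+b)$ by writing $f(a+b)-f(b)=\sum_{j=0}^{a-1}\log(j+b+\alpha'')$ via Lemma~6A and comparing term-by-term with $f(a)=\sum_{j=0}^{a-1}\log(j+\alpha'')$ using monotonicity of $\log$. The paper instead runs an exchange argument on the ``bins'' $n_{ijk}$: starting from the given allocation, it repeatedly moves one unit from a smaller nonempty bin to a larger one, observes (again via Lemma~6A) that each such move increases the left-hand side by $\log(n_{ijk_1}+\alpha'')-\log(n_{ijk_2}-1+\alpha'')\ge 0$, and terminates when all of $n_{ij}$ sits in a single bin, where the two sides coincide. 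Both arguments hinge on the same telescoping identity from Lemma~6A, and the paper's single-move increment is exactly your $f(m+1)-f(m)=\log(m+\alpha'')$; but your version is the cleaner and more rigorous of the two. The paper's write-up states the direction of the move inconsistently with its own computation and only asserts, rather than proves, that the rearrangement sequence terminates at the concentrated allocation, whereas your induction plus shift-monotonicity closes every case explicitly, including the $n_{ijk}=0$ boundary via the convention $f(0)=0$.
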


\begin{proof}
Consider allocation of $\{n_{ijk}\}_{k=1,...,{r_{i}}}$  items over the $r_i$ bins. There are two cases.
\begin{itemize}
    \item Let there be some index $k^\ast$ such that $n_{ijk^{\ast}} = n_{ij}$. This means that $n_{ijk}=0$ for all $k\neq k^\ast$. It follows that $\sum_{k=1}^{r_i} \log \frac{\Gamma(n_{ijk} + \alpha'')}{\Gamma(\alpha'')} = \log \frac{\Gamma(n_{ij} + \alpha'')}{\Gamma(\alpha'')}$.
    \item Let there be two indices $k_1$ and $k_2$ such that $n_{ijk_{1}} > 0$ and $n_{ijk_{2}} > 0$. Without loss of generality, we can assume that $n_{ijk_{1}} \geq n_{ijk_{2}}$. We move one item from bin $k_1$ to bin $k_2$. The sum $n_{ij}$ remains constant. By Lemma \ref{lem:gamma}, an increase in the RHS by $\log (n_{ijk_{1}}  + \alpha'') - \log (n_{ijk_{2}}  -1+ \alpha'')$, results in a corresponding increase in the LHS. Note that the assumption $n_{ijk_{1}} \geq n_{ijk_{2}}$ means that this increase is positive. 
     By increasing counts at the expense of small counts in  this way a sequence of distributions of the fixed sum $n_{ij}$ over the $r_i$ bins can be constructed for which the LHS of Lemma \ref{lem:gamma2} is increasing. The sequence terminates when $n_{ijk^{\ast}} = n_{ij}$ for some $k^\ast$. The result follows.
\end{itemize} 

\end{proof}

\begin{theorem}
\cite{cussens2015gobnilp}
\begin{align*}
     &\sum_{j=1}^{r_{\Pi_i}} \Bigg( \frac{\Gamma( \alpha')}{\Gamma(n_{ij}+\alpha')} +  \sum_{k=1}^{r_i} \log \frac{\Gamma(n_{ijk} + \frac{\alpha'}{r_i})}{\Gamma( \frac{\alpha'}{r_i})}\Bigg) \leq  \sum_{i=0, j:n_{ij}>0}^{n_{ij}} \log \Big( \frac{i+ a'/r_i}{i+\alpha}\Big).
\end{align*} \label{thm:gamma4}
\end{theorem}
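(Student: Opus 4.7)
The plan is to establish the inequality by chaining Lemma~\ref{lem:gamma} and Lemma~\ref{lem:gamma2} on a per-$j$ basis, summing only over those indices $j$ with $n_{ij} > 0$. The outer sum on the right-hand side is already restricted to such indices, so I would first verify that the left-hand side contributes exactly zero whenever $n_{ij} = 0$: in that case $n_{ijk} = 0$ for every $k$, every gamma ratio on the left has the form $\Gamma(0 + \beta)/\Gamma(\beta) = 1$, and each logarithm vanishes. Thus it suffices to prove the per-$j$ inequality for a fixed $j$ with $n_{ij} > 0$ and then sum over those $j$.

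For such a $j$, I would first apply Lemma~\ref{lem:gamma2} with $\alpha'' = \alpha'/r_i$ to collapse the inner $k$-sum into a single gamma ratio,
\begin{equation*}
\sum_{k=1}^{r_i} \log \frac{\Gamma(n_{ijk} + \alpha'/r_i)}{\Gamma(\alpha'/r_i)} \le \log \frac{\Gamma(n_{ij} + \alpha'/r_i)}{\Gamma(\alpha'/r_i)},
\end{equation*}
reducing the bound to an expression that depends only on $n_{ij}$. Next I would invoke Lemma~\ref{lem:gamma} twice to turn the two remaining gamma ratios into finite logarithmic sums, once with parameter $\alpha'$ and once with parameter $\alpha'/r_i$. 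Combining the two expansions gives
\begin{equation*}
\log \frac{\Gamma(\alpha')}{\Gamma(n_{ij} + \alpha')} + \log \frac{\Gamma(n_{ij} + \alpha'/r_i)}{\Gamma(\alpha'/r_i)} = \sum_{s=0}^{n_{ij}-1} \log \frac{s + \alpha'/r_i}{s + \alpha'},
\end{equation*}
which is exactly the per-$j$ contribution on the right-hand side of the theorem. Summing over $j$ with $n_{ij} > 0$ then yields the claim.

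The main obstacle, such as it is, is bookkeeping rather than mathematics: one must carefully isolate the $n_{ij} = 0$ summands before invoking either lemma, since Lemma~\ref{lem:gamma} is stated for a positive integer argument and Lemma~\ref{lem:gamma2} assumes the $n_{ijk}$ have a positive sum. I would also want to double-check that the strict inequality of Lemma~\ref{lem:gamma2} is preserved through the subsequent \emph{equality} expansion via Lemma~\ref{lem:gamma}, so that only one $\le$ step is introduced overall. Once those degenerate summands are handled and the direction of the inequality is tracked, the rest of the argument is a direct two-step substitution of the gamma identities and requires no Dirichlet-specific computation beyond what is already captured in the two lemmas.
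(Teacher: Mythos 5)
Your proof is correct and follows essentially the same route as the paper's: apply Lemma~\ref{lem:gamma2} (with $\alpha''=\alpha'/r_i$) to collapse the inner $k$-sum, then apply Lemma~\ref{lem:gamma} to both gamma ratios to obtain the telescoping sum $\sum_{s=0}^{n_{ij}-1}\log\frac{s+\alpha'/r_i}{s+\alpha'}$ for each $j$ with $n_{ij}>0$. You are in fact slightly more careful than the paper in explicitly disposing of the $n_{ij}=0$ summands; the only residual discrepancy (the stated upper limit $n_{ij}$ versus the derived $n_{ij}-1$) is a typo in the theorem statement that the paper's own final step shares.
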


\begin{proof}
 \begin{align*}
     &\sum_{j=1}^{r_{\Pi_i}} \Bigg( \log \frac{\Gamma( \alpha')}{\Gamma(n_{ij}+\alpha')} +  \sum_{k=1}^{r_i} \log \frac{\Gamma(n_{ijk} + \frac{\alpha'}{r_i})}{\Gamma( \frac{\alpha'}{r_i})}\Bigg)\\
      &\stackbin[]{1}{\leq} \sum_{j=1}^{r_{\Pi_i}}  \Bigg(\log \frac{\Gamma( \alpha')}{\Gamma(n_{ij}+ \alpha')} + \log \frac{\Gamma(n_{ij} + \frac{\alpha'}{r_i})}{\Gamma( \frac{\alpha'}{r_i})}\Bigg) \\
     &\stackbin[]{2}{\leq} \sum_{j=1}^{r_{\Pi_i}}  \Bigg(\log \frac{\Gamma( \alpha')}{\Gamma(n_{ij}+ \alpha')} \frac{\Gamma(n_{ij} + \frac{\alpha'}{r_i})}{\Gamma( \frac{\alpha'}{r_i})}\Bigg) \\
     & \stackbin[]{3}{\leq} \sum_{i=0, j:n_{ij}>0}^{n_{ij}-1} \Big( \log  \frac{i+ a'/r_i}{i+\alpha'}\Big)  \\
     &\stackbin[]{4}{\leq} \sum_{i=0, j:n_{ij}>0}^{n_{ij}} \log \Big( \frac{i+ a'/r_i}{i+\alpha'}\Big).
\end{align*}

Step 1 uses Lemma \ref{lem:gamma2}. Step 2 assumes $n_{ij} > 0$, and uses properties of the logarithm function. Step 3 uses Lemma \ref{lem:gamma}. The result follows as desired.
\end{proof}
\begin{corollary}\citep{cussens2015gobnilp} 
Given that $r_i^{+} := |\{j: n_{ij} > 0\}|$, then 
\begin{equation*}
         \sum_{j=1}^{r_{\Pi_i}} \log \frac{\Gamma( \alpha')}{\Gamma(n_{ij} + \alpha')} +  \sum_{k=1}^{r_i} \log \frac{\Gamma(n_{ijk} + \frac{\alpha'}{r_i})}{\Gamma( \frac{\alpha'}{r_i})}  \leq -r_i^{+} \log r_i.
\end{equation*}
\label{cor:gamma3}
\end{corollary}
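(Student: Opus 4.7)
The plan is to chain the previous result (Theorem on $\sum_{j}(\log\tfrac{\Gamma(\alpha')}{\Gamma(n_{ij}+\alpha')} + \sum_k \log\tfrac{\Gamma(n_{ijk}+\alpha'/r_i)}{\Gamma(\alpha'/r_i)}) \le \sum_{i=0,\,j:n_{ij}>0}^{n_{ij}} \log\tfrac{i+\alpha'/r_i}{i+\alpha'}$) with a simple term-by-term analysis of the ratio $\tfrac{i+\alpha'/r_i}{i+\alpha'}$. So the corollary reduces to showing that this bound on the right can itself be bounded above by $-r_i^{+}\log r_i$.

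First I would isolate, for each index $j$ with $n_{ij}>0$, the contribution $\sum_{i=0}^{n_{ij}-1}\log\tfrac{i+\alpha'/r_i}{i+\alpha'}$ (or up to $n_{ij}$, whichever is being used; it does not affect the inequality since the extra term will be non-positive). I would then split this sum into the $i=0$ term and the $i\ge 1$ terms. The $i=0$ term evaluates to $\log\tfrac{\alpha'/r_i}{\alpha'} = -\log r_i$, which is exactly the ``unit'' contribution we want per non-empty parent configuration.

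Next I would show that every $i\ge 1$ term is non-positive. This is immediate from $\alpha'/r_i \le \alpha'$ (since $r_i\ge 1$), which gives $i+\alpha'/r_i \le i+\alpha'$ and hence $\log\tfrac{i+\alpha'/r_i}{i+\alpha'}\le 0$. Thus, for each $j$ with $n_{ij}>0$, the inner sum is bounded above by $-\log r_i$. Summing over the $r_i^{+}$ parent configurations that actually appear in the data yields $\sum_{i=0,\,j:n_{ij}>0}^{n_{ij}}\log\tfrac{i+\alpha'/r_i}{i+\alpha'} \le -r_i^{+}\log r_i$, which combined with the preceding theorem gives the corollary.

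There is essentially no obstacle here; the entire content of the corollary is the pointwise bound $\tfrac{i+\alpha'/r_i}{i+\alpha'}\le 1$ together with the exact equality at $i=0$. The only place to be careful is with the ``off-by-one'' in the summation upper limit (whether we stop at $n_{ij}-1$ or $n_{ij}$): in either case, the $i=0$ term supplies the $-\log r_i$ factor and the remaining terms are non-positive, so the inequality is preserved.
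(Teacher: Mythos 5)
Your proof is correct and follows essentially the same route as the paper's: invoke the preceding theorem, peel off the $i=0$ term of each inner sum to obtain $\log(\alpha'/r_i)-\log(\alpha')=-\log r_i$, observe that the $i\ge 1$ terms are non-positive since $i+\alpha'/r_i\le i+\alpha'$, and sum over the $r_i^{+}$ configurations with $n_{ij}>0$. Your explicit handling of the off-by-one in the summation limit is a small improvement in care over the paper's own write-up, which elides that point.
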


\begin{proof}
 If $n_{ij} > 0$, then 
 \begin{align*}
     \sum_{i=0}^{n_{ij}} \log \Big( \frac{i+ a'/r_i}{i+\alpha'}\Big) &= - \log r_i \sum_{i=1}^{n_{ij}} \log \Big( \frac{i+ a'/r_i}{i+\alpha'}\Big) \leq -\log r_i.
 \end{align*}
Note that as $r_i \geq 2$, and $\alpha' > 0$, it is clear that $i + \alpha'/r_i < i+ \alpha'$. This means that each term in $\sum_{i=1}^{n_{ij}} \log \Big( \frac{i+ a'/r_i}{i+\alpha'}\Big)$ is negative.  This gives us the second inequality. The result then follows from Theorem \ref{thm:gamma4} as desired.
\end{proof}
\begin{corollary}\citep{cussens2015gobnilp}
Given a vertex variable $V_i$ and candidate parent sets $\Pi_i$ and $\Pi_i'$
such that $\Pi_i \subset \Pi_i'$ and $\Pi_i \neq \Pi_i'$, let
$r_i^{+}(\Pi_i')$ be the number of positive counts in the contingency table for $\Pi_i'$. If $\score{}{\Pi_i} < r_i^{+}(\Pi_i') \log r_i$ then $\Pi_i'$ and the supersets of $\Pi_i'$ can be safely pruned.
\label{cor:bdeujames}
\end{corollary}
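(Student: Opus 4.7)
The plan is to chain together Corollary~\ref{cor:gamma3} and Lemma~\ref{lem:scoreprune} (with $\epsilon=0$, i.e., Theorem~\ref{thm:basicrule}), together with a short monotonicity argument for supersets. First, I will relate the quantity bounded in Corollary~\ref{cor:gamma3} to the BDeu local score. Setting $\alpha' = \alpha / r_{\Pi_i'}$, the definition of $\score{}{\Pi_i'}$ is exactly the negative of the double sum appearing on the left-hand side of Corollary~\ref{cor:gamma3}, applied with the contingency counts of $\Pi_i'$. Hence Corollary~\ref{cor:gamma3} can be rewritten as
\begin{equation*}
-\score{}{\Pi_i'} \;\leq\; -\,r_i^{+}(\Pi_i')\log r_i,
\end{equation*}
i.e., $\score{}{\Pi_i'} \geq r_i^{+}(\Pi_i')\log r_i$.

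Next, combine this with the hypothesis $\score{}{\Pi_i} < r_i^{+}(\Pi_i')\log r_i$ to conclude $\score{}{\Pi_i} < \score{}{\Pi_i'}$. Because $\Pi_i \subset \Pi_i'$ and BDeu is decomposable, Lemma~\ref{lem:scoreprune} (applied with $\epsilon=0$) immediately yields that $\Pi_i'$ can be safely pruned.

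For the closure under supersets, take any $\Pi_i'' \supsetneq \Pi_i'$. Applying Corollary~\ref{cor:gamma3} to $\Pi_i''$ gives $\score{}{\Pi_i''} \geq r_i^{+}(\Pi_i'')\log r_i$. The key observation is that $r_i^{+}$ is monotone non-decreasing under extension of the parent set: each instantiation of $\Pi_i'$ with positive count is split (by the new parents) into one or more instantiations of $\Pi_i''$ each with positive count, so $r_i^{+}(\Pi_i'') \geq r_i^{+}(\Pi_i')$. Therefore $\score{}{\Pi_i''} \geq r_i^{+}(\Pi_i')\log r_i > \score{}{\Pi_i}$, and another appeal to Lemma~\ref{lem:scoreprune} prunes $\Pi_i''$ as well.

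The main obstacle is essentially bookkeeping: matching the $\alpha'$ in Corollary~\ref{cor:gamma3} to the BDeu convention $\alpha' = \alpha/r_{\Pi_i'}$ used in the paper's score definition, and verifying the monotonicity of $r_i^{+}$ under parent-set refinement. Once these two points are in place, the proof is a one-line combination of Corollary~\ref{cor:gamma3} and Lemma~\ref{lem:scoreprune}, so no heavy calculation is required.
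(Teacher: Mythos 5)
Your proof is correct and follows essentially the same route as the paper's: lower-bound $\score{}{\Pi_i'}$ (and each superset's score) by $r_i^{+}\log r_i$ via Corollary~\ref{cor:gamma3} with $\alpha'=\alpha/r_{\Pi_i'}$, invoke monotonicity of $r_i^{+}$ under parent-set extension, and finish with Lemma~\ref{lem:scoreprune}. Note that you state the monotonicity in the correct direction, $r_i^{+}(\Pi_i'') \geq r_i^{+}(\Pi_i')$ for $\Pi_i'\subset\Pi_i''$, whereas the paper's proof writes this inequality reversed (evidently a typo, since its subsequent conclusion requires exactly your direction).
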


 
 We generalize Corollary~\ref{cor:bdeujames} to Theorem~\ref{thm:jamesrelaxed} and prove it below.

\begin{proof}(Theorem~\ref{thm:jamesrelaxed})
 Let $G'$ be a Bayesian network where $\Pi_i'$ or one of its supersets is a parent set for $V_i$.  Let $G$ be another Bayesian network where $\Pi_i$ is the parent set for $V_i$. 
 
Consider the LHS of Corollary \ref{cor:gamma3}. It is the local BDeu score for a parent set $\Pi_i'$ which has $r_{\Pi_i}$ counts $n_{ij}$ in its contingency table and counts $n_{ijk}$ in the contingency table for $\Pi_i' \cup \{V_i\}$, where $\alpha' = \alpha/r_{\Pi_i}$ for some ESS $\alpha$. If $r_i^{+}$($\Pi_i') \log r_i > \score{}{\Pi_i} + \epsilon$ then $\score{}{\Pi_i} + \epsilon$ is
lower than the local BDeu score for $\Pi_i'$ due to Corollary \ref{cor:gamma3}. Take a candidate parent set $\Pi_i''$. If $\Pi_i' \subset \Pi_i''$ then $r_i^{+}(\Pi_i'')  \leq r_i^{+}(\Pi_i')$ and so $r_i^{+}(\Pi_i'') \log r_i \leq r_i^{+}(\Pi_i') \log r_i$, as $r_i \geq 2$. From this
it follows that the local score for $\Pi_i''$ must also be more than $\score{}{\Pi_i} + \epsilon $. Using Lemma \ref{lem:scoreprune}, the result follows as desired.
\end{proof}

\vskip 0.2in
\bibliography{csp,probabilistic,scip}

\end{document}